\DeclareRobustCommand\onedot{\futurelet\@let@token\@onedot}
\def\@onedot{\ifx\@let@token.\else.\null\fi\xspace}
\def\st{\textrm{s.t}\onedot}
\def\eg{\emph{e.g}\onedot} 
\def\ie{\emph{i.e}\onedot} \def\Ie{\emph{I.e}\onedot}
\def\etc{\emph{etc}\onedot}
\newtheorem{theorem}{Theorem}
\newtheorem{lemma}{Lemma}
\newcommand{\fone}{\ensuremath{\mathbf{u}_1\xspace}}
\newcommand{\fthei}{\ensuremath{\mathbf{u}_i\xspace}}
\newcommand{\fd}{\ensuremath{\mathbf{u}_d\xspace}}
\newcommand{\fonetilde}{\ensuremath{\tilde{\mathbf{u}}_1\xspace}}
\newcommand{\ftheitilde}{\ensuremath{\tilde{\mathbf{u}}_i\xspace}}
\newcommand{\fdtilde}{\ensuremath{\tilde{\mathbf{u}}_d\xspace}}
\newcommand{\eone}{\ensuremath{\mathbf{e}_1\xspace}}
\newcommand{\ethei}{\ensuremath{\mathbf{e}_i\xspace}}
\newcommand{\ed}{\ensuremath{\mathbf{e}_d\xspace}}
\newcommand{\qtilde}{\ensuremath{\tilde{q}\xspace}}
\newcommand{\Qtilde}{\ensuremath{\tilde{Q}\xspace}}
\newcommand{\atilde}{\ensuremath{\tilde{a}\xspace}}
\newcommand{\Atilde}{\ensuremath{\tilde{A}\xspace}}
\newcommand{\xbf}[2]{\mathbf{#1}_{#2}}
\newcommand{\inner}[2]{\langle{#1},{#2}\rangle_{\mathcal{H}}}
\definecolor{mycolor}{RGB}{219,90,107}
\icmltitlerunning{Towards Better Laplacian Representation in Reinforcement Learning with Generalized Graph Drawing}
\begin{document}

\twocolumn[
\icmltitle{Towards Better Laplacian Representation in Reinforcement Learning with Generalized Graph Drawing}



\icmlsetsymbol{equal}{*}

\begin{icmlauthorlist}
\icmlauthor{Kaixin Wang}{equal,nus}
\icmlauthor{Kuangqi Zhou}{equal,nus}
\icmlauthor{Qixin Zhang}{cityu}
\icmlauthor{Jie Shao}{bytedance}
\icmlauthor{Bryan Hooi}{nus}
\icmlauthor{Jiashi Feng}{nus}
\end{icmlauthorlist}

\icmlaffiliation{nus}{National University of Singapore}
\icmlaffiliation{cityu}{City University of Hong Kong}
\icmlaffiliation{bytedance}{ByteDance AI lab}

\icmlcorrespondingauthor{Kaixin Wang}{kaixin.wang@u.nus.edu}
\icmlcorrespondingauthor{Kuangqi Zhou}{kzhou@u.nus.edu}


\vskip 0.3in
]



\printAffiliationsAndNotice{\icmlEqualContribution} 

\begin{abstract}
The Laplacian representation recently gains increasing attention for reinforcement learning as it provides succinct and informative representation for states, by taking the eigenvectors of the Laplacian matrix of the state-transition graph as state embeddings.
Such representation captures the geometry of the underlying state space and is beneficial to RL tasks such as option discovery and reward shaping. 
To approximate the Laplacian representation in large (or even continuous) state spaces, recent works propose to minimize a spectral graph drawing objective, which however has \emph{infinitely many} global minimizers other than the eigenvectors.
As a result, their learned Laplacian representation may differ from the ground truth. 
To solve this problem, we reformulate the graph drawing objective into a generalized form and derive a new learning objective, which is proved to have eigenvectors as its \emph{unique} global minimizer. 
It enables learning high-quality Laplacian representations that faithfully approximate the ground truth.
We validate this via comprehensive experiments on a set of gridworld and continuous control environments.
Moreover, we show that our learned Laplacian representations lead to more exploratory options and better reward shaping.
\end{abstract}

\section{Introduction}
\label{sec:intro}
Reinforcement learning (RL) aims to train an agent that can take proper sequential actions based on the perceived states from the environments~\cite{sutton2018reinforcement}.
Thus the quality of state representations is important to the agent performance by benefiting its generalization ability~\cite{zhang2018decoupling, stooke2020decoupling, agarwal2021contrastive}, exploration ability~\cite{pathak2017curiosity, machado2017laplacian, machado2020count}, and learning efficiency~\cite{dubey2018investigating, wu2018laplacian}, \etc.
Recently, the Laplacian representation receives increasing attention~\cite{mahadevan2005proto, machado2017laplacian, wu2018laplacian, jinnai2019exploration}. 
It views the states and transitions in an RL environment as nodes and edges in a graph, and forms a $d$-dimension state representation with $d$ smallest eigenvectors of the graph Laplacian.
Such representations can capture the geometry of the underlying state space, as illustrated in Fig.~\ref{fig:fig1}(b), which greatly benefits RL  in option discovery~\cite{machado2017laplacian, jinnai2019exploration} and reward shaping~\cite{wu2018laplacian}.

\begin{figure}[t]
    \centering
    \includegraphics[width=0.95\linewidth]{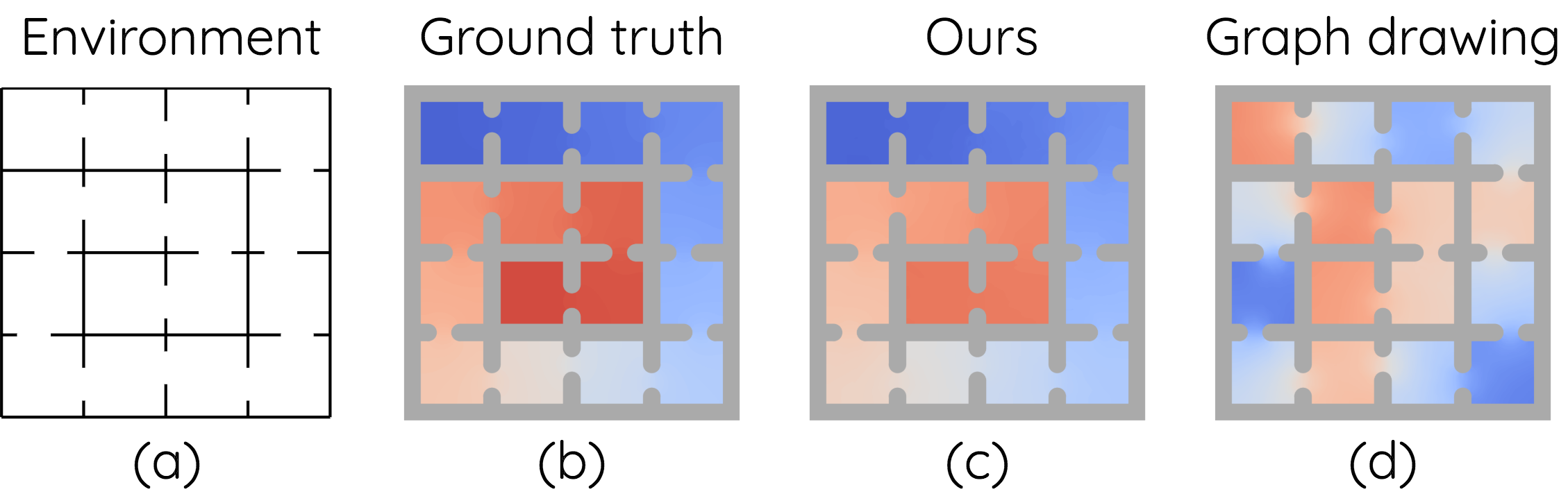}
    \vskip -0.05in
    \caption{Visualization of environment and Laplacian state representations (2nd dimension).
    (a)~Top view of a continuous control navigation environment. (b)~Ground-truth Laplacian representation. It encodes geometry of the environment: nearby states have similar values while distant states have dissimilar values. (c)~Our learned representation. It can be seen that our result is very similar to ground truth. 
    (d)~Representation learned by spectral graph drawing~\cite{wu2018laplacian}. It significantly diverges from ground truth and fails to capture geometric information about the state space. Best viewed in color.
    }
     \label{fig:fig1}
\end{figure}

However, to compute the exact Laplacian representation is very challenging,  as directly computing eigendecomposition of the graph Laplacian requires access to the environment transition dynamics and involves expensive matrix operation.
Hence it is largely limited to small finite state spaces.
To deal with large (or even continuous) state spaces, previous works resort to approximation methods~\cite{machado2017laplacian, machado2018eigenoption, wu2018laplacian}. 
A most efficient one is \cite{wu2018laplacian}, which minimizes a spectral graph drawing objective~\cite{koren2005drawing}.  
However, this objective has infinitely many other global minimizers besides the ground truth Laplacian representation (\ie, $d$ smallest eigenvectors), as it is invariant to an arbitrary orthogonal transformation over the optimization variables. 
The resulted representations can then correspond to other minimizers and diverge from the ground truth, thus unable to encode the geometry of the state space as expected (see Fig.~\ref{fig:fig1}(d)).

To break such invariance and approximate Laplacian representation closer to the ground-truth, 
we reformulate the graph drawing objective into a generalized form by introducing coefficients for each term in it.
By assigning decreasing values for these coefficients, we can derive a training objective that breaks the undesired invariance.
We provide theoretical guarantees that the proposed objective has the $d$ smallest eigenvectors as its \textit{unique} global minimizer under mild assumptions.
As shown in Fig.~\ref{fig:fig1}(c), minimizing the new objective is able to ensure faithful approximation to the ground truth Laplacian representation.

To verify the effectiveness of our method for learning high-quality Laplacian representations, we conduct experiments in gridworld and continuous control environments.  
We show that the learned representations by our method more accurately approximate the ground truth, compared with the ones from the graph drawing objective.
Furthermore, we apply the learned representations to two downstream RL tasks. 
It is demonstrated that, for option discovery task~\cite{machado2017laplacian} our method leads to discovered options that are more exploratory compared to using the representation learned by graph drawing; in reward shaping task~\cite{wu2018laplacian}, our learned representation is better at accelerating agents' learning than previous work~\cite{wu2018laplacian}.

The rest of the paper is organized as follows. In Sec.~\ref{sec:background}, we introduce background about RL and Laplacian representation in RL.
In Sec.~\ref{sec:method}, we propose our new objective for learning Laplacian representation. 
Then, we conduct experiments to demonstrate that our proposed objective is able to learn high-quality representations in Sec.~\ref{sec:exp}. 
In Sec.~\ref{sec:related works} we review related works and Sec.~\ref{sec:conclusion} concludes the paper.

\section{Background}
\label{sec:background}

\subsection{Reinforcement Learning}
In the RL framework~\cite{sutton2018reinforcement}, an agent interacts with an environment by observing states and taking actions, with an aim of maximizing cumulative reward. 
We consider Markov Decision Process (MDP) formalism in this paper. 
An MDP can be described by a 5-tuple $\langle \mathcal{S}, \mathcal{A}, r, p, \gamma\rangle$.
Specifically, at time $t$ the agent observes state $s_t\in \mathcal{S}$ and takes an action $a_t\in\mathcal{A}$. The environment then yields a reward signal $R_t$ sampled from the reward function $r(s_t, a_t)$. The state observation in the next timestep $s_{t+1}\in\mathcal{S}$ is sampled according to an environment-specific transition distribution function $p(s_{t+1}|s_t, a_t)$. A policy is defined as a mapping $\pi: \mathcal{S}\to\mathcal{A}$ that returns an action $a$ given a state $s$. The goal of the agent is to learn an optimal policy $\pi^*$ that maximizes the expected cumulative reward:
\begin{equation}
    \pi^* = \arg\max_{\pi\in\Pi}\mathbb{E}_{p,\pi}\sum_{t=0}^{\infty}\gamma^t R_t,
\end{equation}
where $\Pi$ denotes the policy space and $\gamma\in[0, 1)$ is a discount factor.

\subsection{Laplacian Representation in RL}
\label{subsec: background: laprep}

By considering states and transitions in an MDP as nodes and edges in a graph, the Laplacian state representation is formed by the $d$ smallest eigenvectors of the graph Laplacian. Specifically, each eigenvector (of length $|\mathcal{S}|$) corresponds to a dimension of the Laplacian representation for all states. Formally, we denote the graph as $\mathcal{G}=(\mathcal{S}, \mathcal{E})$ where $\mathcal{E}$ is the edge set consisting of transitions between states. The graph Laplacian of $\mathcal{G}$ is defined as $L=D-A$, where $A \in \mathbb{R}^{|\mathcal{S}|\times|\mathcal{S}|}$ is the adjacency matrix of  $\mathcal{G}$, and $D=\mathrm{diag}(A\mathbf{1})$ is the degree matrix~\cite{chung1997spectral}. We denote the $i$-th smallest eigenvalue of $L$ as $\lambda_i$, and the corresponding unit eigenvector as $\ethei \in \mathbb{R}^{|\mathcal{S}|}$. The $d$-dimensional Laplacian representation of a state $s$ is $\varphi(s)=(\eone[s], \cdots, \ed[s])$, where $\ethei[s]$ denotes the entry in vector $\ethei$ that corresponds to state $s$. In particular, $\eone$ is a normalized all-ones vector and has the same value for all $s$.

The Laplacian representation is known to be able to capture the geometry of the underlying state space~\cite{mahadevan2005proto, machado2017laplacian}, and thus has been applied in option discovery~\cite{machado2017laplacian, jinnai2019exploration} and reward shaping~\cite{wu2018laplacian}.

In the Laplacian framework for option discovery~\cite{machado2017laplacian}, each dimension of the Laplacian representation defines an intrinsic reward function $r_i(s, s')=\mathbf{e}_i[s]-\mathbf{e}_i[s']$. The options~\cite{sutton1999between} are discovered by maximizing the cumulative discounted intrinsic reward. These options act at different time scales; that is, when an agent follows an option to take actions, the length of its trajectory until termination varies across different options (see Fig. 3 in~\cite{machado2017laplacian}).
Such a property makes these options helpful for exploration: longer options enable agents to quickly reach distant areas and shorter options ensure sufficient exploration in local areas.

When using the Laplacian representation for reward shaping in goal-achieving tasks~\cite{wu2018laplacian}, the reward is shaped based on Euclidean distance, as in~\cite{pong2018temporal, nachum2018data}.
Specifically, the pseudo-reward is defined as the negative $\ell_2$ distance between the agent's state and the goal state in representation space: $r_t = -\|\varphi(s_{t+1}) - \varphi(s_\text{goal})\|_2$.
Since the Laplacian representation can reflect the geometry of the environment dynamics, such pseudo-reward can be helpful in accelerating the learning process.

\subsection{Approximating Laplacian Representation}
\label{subsec:background:spectral graph drawing}

Obtaining Laplacian representation by directly computing eigendecomposition of graph Laplacian requires access to transition dynamics of the environment and involves expensive matrix operations, which is infeasible for environments with a large or even continuous state space.
One efficient approach for approximating Laplacian representation is proposed by~\citet{wu2018laplacian}, which minimizes the following spectral graph drawing objective~\cite{koren2005drawing}:
\begin{equation}
\begin{aligned}
    \min_{\fone, \cdots, \fd} \quad & \sum_{i=1}^d \fthei^\top L \fthei \\
    \st \quad & {\xbf{u}{i}^\top\xbf{u}{j} = \delta_{ij}}, \forall i, j = 1,\cdots, d,
\end{aligned}
\label{eqn:graph drawing}
\end{equation}
where $(\mathbf{u}_1, \cdots, \mathbf{u}_d) \in\mathbb{R}^{|\mathcal{S}|\times d}$ are to approximate the eigenvectors $(\eone,\cdots,\ed)$, and $\delta_{ij}$ is the Kronecker delta.
However, minimizing such an objective can only ensure that $\mathbf{u}_1, \cdots, \mathbf{u}_d$ span the same subspace as $\eone,\cdots,\ed$, as mentioned in~\cite{wu2018laplacian}. It does not guarantee $\mathbf{u}_i=\mathbf{e}_i$ for $i=1,\cdots,d$, because the global minimizer $(\eone,\cdots,\ed)$ is not unique.

Transforming $(\eone,\cdots,\ed)$ with an arbitrary orthogonal transformation also achieves global minimum~\cite{koren2005drawing}. 
Therefore, the problem in Eqn.~(\ref{eqn:graph drawing}) does not ensure that the solution is the eigenvectors, and may converge to any other global minima. 
Accordingly, the learned Laplacian representations may diverge from the ground truth.
We will show in Sec.~\ref{sec:exp} that such representation is less helpful in discovering exploratory options and reward shaping.

\section{Method}
\label{sec:method}
As discussed in Sec.~\ref{subsec:background:spectral graph drawing}, the graph drawing objective is invariant under orthogonal transformation: applying an arbitrary orthogonal transformation to the $d$ smallest eigenvectors also yields a global minimizer, 
which hinders learning Laplacian representations close to the ground truth.

We then consider breaking such invariance for achieving more accurate approximation.
To this end, we reformulate the graph drawing objective into a weighted-sum form, yielding the following generalized graph drawing objective:
\begin{equation}
\label{eqn:ggd}
    \begin{aligned}
    \min_{\fone,\cdots,\fd}\quad &{\sum_{i=1}^d c_i \fthei^\top L \fthei} \\
    \st \quad & {\xbf{u}{i}^\top\xbf{u}{j} = \delta_{ij}}, \forall i, j = 1,\cdots, d,
    \end{aligned}
\end{equation}
where $c_i$ is the coefficient for the $i$-th term $\fthei^\top L \fthei$. When $c_i=1$ for every $i$, it degenerates to the original graph drawing objective in Eqn.~\eqref{eqn:graph drawing}.

We find that, under mild assumptions, if $c_1,\cdots,c_d$ are strictly decreasing, then the $d$ smallest eigenvectors (ground-truth Laplacian representation) make the \textit{unique} global minimizer of the above generalized graph drawing objective, as stated in the following theorem.
\begin{theorem}
\label{thm:optimiality and uniqueness}
Assume $\fthei \in \mathrm{span}(\{\eone,\cdots,\ed\})$, $\forall i=1,\cdots,d$, and $\lambda_1 < \cdots <\lambda_d$. Then, $c_1 > \cdots > c_d > 0$ is a sufficient condition for the generalized graph drawing objective to have a unique global minimizer $(\fone^*,\cdots,\fd^*)=(\eone,\cdots,\ed)$, and the corresponding minimum is $\sum_{i=1}^dc_i\lambda_i$.
\end{theorem}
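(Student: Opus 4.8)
The plan is to exploit the span assumption to collapse the problem onto a finite-dimensional optimization over orthogonal matrices, and then to expose a hidden doubly-stochastic structure that converts it into a rearrangement problem. First I would expand each variable in the eigenbasis, writing $\mathbf{u}_i = \sum_{k=1}^d b_{ik}\mathbf{e}_k$, which is legitimate precisely because of the hypothesis $\mathbf{u}_i \in \mathrm{span}(\{\mathbf{e}_1,\dots,\mathbf{e}_d\})$. Since the $\mathbf{e}_k$ are orthonormal, the constraint $\mathbf{u}_i^\top\mathbf{u}_j = \delta_{ij}$ translates exactly to $BB^\top = I$, so $B=(b_{ik})$ is an orthogonal matrix. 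Using $L\mathbf{e}_k=\lambda_k\mathbf{e}_k$ together with orthonormality gives $\mathbf{u}_i^\top L\mathbf{u}_i=\sum_k \lambda_k b_{ik}^2$, so the objective becomes the bilinear expression $\sum_{i=1}^d\sum_{k=1}^d c_i\lambda_k\, b_{ik}^2$.

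The key observation is that the matrix $P$ with entries $P_{ik}=b_{ik}^2$ is doubly stochastic: both its rows and its columns sum to one, these being exactly the diagonal entries of $BB^\top$ and $B^\top B$. The objective is linear in $P$, namely $\langle M, P\rangle$ with $M_{ik}=c_i\lambda_k$. A linear functional over the Birkhoff polytope of doubly-stochastic matrices attains its optimum at a vertex, i.e. a permutation matrix, so it suffices to minimize $\sum_{i=1}^d c_i\lambda_{\sigma(i)}$ over permutations $\sigma$. Because $c_1>\cdots>c_d>0$ is strictly decreasing while $\lambda_1<\cdots<\lambda_d$ is strictly increasing, the rearrangement inequality shows this sum is uniquely minimized by the identity permutation, with value $\sum_{i=1}^d c_i\lambda_i$, matching the claimed minimum.

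For uniqueness I would argue that the set of minimizers of a linear functional over a polytope is a face, and a face is the convex hull of the vertices it contains; since the identity is the \emph{unique} minimizing vertex (the rearrangement inequality is strict under strict monotonicity), the minimizing face collapses to the single point $P=I$. Translating back, $P=I$ forces $b_{ik}^2=\delta_{ik}$, hence $b_{ii}=\pm1$ with all off-diagonal entries vanishing, so $\mathbf{u}_i=\pm\mathbf{e}_i$; up to the sign convention inherent to unit eigenvectors this is exactly $(\mathbf{e}_1,\dots,\mathbf{e}_d)$. I note that $P=I$ is itself orthostochastic, so restricting from all doubly-stochastic matrices back to those actually arising from an orthogonal $B$ does not discard the optimum.

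The main obstacle I anticipate is the uniqueness half rather than the lower bound: the inequality chain is immediate once the doubly-stochastic structure is spotted, but passing rigorously from ``unique optimal vertex'' to ``unique optimal point of the whole polytope'' needs the face characterization, and one must be transparent about the unavoidable $\pm$ sign ambiguity when identifying the minimizer with the eigenvectors. An alternative route that sidesteps the Birkhoff polytope is an Abel-summation reformulation $\sum_i c_i\,\mathbf{u}_i^\top L\mathbf{u}_i=\sum_{j=1}^d(c_j-c_{j+1})\sum_{i=1}^j\mathbf{u}_i^\top L\mathbf{u}_i$ with $c_{d+1}:=0$, bounding each partial sum below by $\sum_{i=1}^j\lambda_i$ via the Ky Fan / Courant--Fischer characterization; the strictly positive weights $c_j-c_{j+1}$ then force equality in every partial trace, and the strict spectral gaps propagate inductively to pin down $\mathbf{u}_j=\pm\mathbf{e}_j$.
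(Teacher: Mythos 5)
Your argument is correct, and its core differs from the paper's own proof. Both start with the same reduction: expand each $\mathbf{u}_i$ in the eigenbasis, note that the coefficient matrix is orthogonal, and rewrite the objective as $\sum_{i,k}c_i\lambda_k b_{ik}^2$, a linear functional of the entrywise-squared matrix. From there the paper stays elementary and hands-on: it writes the gap to $\sum_i c_i\lambda_i$ as $\sum_{k=1}^{d-1}s_k(\lambda_{k+1}-\lambda_k)$ by telescoping the differences $\lambda_j-\lambda_i$ and swapping the order of summation, proves $s_k\geqslant 0$ via a lemma saying that for the squared orthogonal matrix the off-diagonal block sums above and below the diagonal agree, and extracts uniqueness from the equality conditions of that chain (first the strict upper triangle of $\tilde A$ vanishes, then the lemma kills the lower triangle, then the row sums force $\tilde a_{ii}=1$). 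You instead name the doubly stochastic structure explicitly, relax to the Birkhoff polytope, reduce to permutation matrices by Birkhoff--von Neumann, and apply the strict rearrangement inequality; uniqueness then falls out of the face structure (a unique optimal vertex forces a singleton optimal face), and you correctly flag that the relaxation is tight because $P=I$ is orthostochastic, and that uniqueness over the larger set implies uniqueness over the orthostochastic subset. Your route is shorter and more conceptual but imports standard polytope machinery; the paper's is self-contained (its Lemma~1 is exactly the fragment of double stochasticity it needs) and carries over verbatim to the Hilbert-space version in the appendix. Your parenthetical Abel-summation/Ky Fan alternative is essentially the paper's actual argument in disguise: the $s_k\Delta_{k+1,k}$ decomposition is precisely the summation-by-parts bound with weights $c_k-c_{k+1}$. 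Both treatments of the $\pm$ sign ambiguity in identifying the minimizer with $(\mathbf{e}_1,\cdots,\mathbf{e}_d)$ coincide.
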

\begin{proof}
Here we give the proof sketch of Theorem~\ref{thm:optimiality and uniqueness} and the full proof is deferred to the Appendix. Denote the objective of problem~\eqref{eqn:ggd} as $h(\fone,\cdots,\fd)$. Let $g=h(\fone,\cdots,\fd)-\sum_{i=1}^dc_i\lambda_i$. 
Since $\fthei^\top\xbf{u}{j}=\delta_{ij}$ and $\fthei \in \mathrm{span}(\{\eone,\cdots,\ed\})$, without loss of generality, $(\fone,\cdots,\fd)$ can be written as $(\eone,\cdots,\ed)Q$, where $Q=(q_{ij})$ is an orthogonal matrix.

We first prove optimality. By applying Fubini's Theorem~\cite{fubini1907sugli} to $g$, we can rewrite $g$ as $g=\sum_{k=1}^{d-1}s_k(\lambda_{k+1}-\lambda_{k})$, where $s_k$ depends on $Q$ and $d$. We can prove $\forall k \in \{1,\cdots,d-1\}, s_k \geqslant 0$ (this is given by $c_1 > \cdots > c_d > 0$). Hence $g \geqslant 0$. Notice that the inequality is tight when $(\fone,\cdots,\fd)=(\eone,\cdots,\ed)$, which proves the optimality.

Then, we prove uniqueness by contradiction. Denote the global minimum $\sum_{i=1}^dc_i\lambda_i$ as $p^*$. Assume that there exists another global minimizer, denoted as $(\fonetilde,\cdots,\fdtilde)$, \ie, $\exists i, \ftheitilde \neq \pm\ethei$. Here we require $\ftheitilde \neq \pm\ethei$ because the sign of $\ethei$ is arbitrary. Again, we rewrite $(\fonetilde,\cdots,\fdtilde)=(\eone,\cdots,\ed)\Qtilde$, where $\Qtilde=(\qtilde_{ij})$ is an orthogonal matrix. Therefore, the assumption is equivalent to $\exists i \in \{1,\cdots,d\}, \qtilde_{ii} \notin \{1,-1\}$.  Due to the optimality of $(\fonetilde,\cdots,\fdtilde)$, we know $h(\fonetilde,\cdots,\fdtilde)=p^*$. However, we can prove that this equality implies $\forall i \in \{1,\cdots,d\}, \qtilde_{ii} \in \{1,-1\}$. This contradicts with $\exists i, \ftheitilde \neq \pm\ethei$, and hence proves uniqueness.
\end{proof}

We will empirically show that the two assumptions hold in our experiments (see Sec.~\ref{subsubsec:verification of assumptions}). Based on the above theorem, we can choose $c_1 > \cdots > c_d > 0$ to obtain a learning objective that can faithfully approximate the Laplacian representation. A natural choice is $c_d=1, c_{d-1}=2, \cdots, c_1=d$, which gives the following objective:
\begin{equation}
\begin{aligned}
\label{eqn:sgd}
\min_{\fone,\cdots,\fd}\quad &{\sum_{i=1}^d (d-i+1) \fthei^\top L \fthei} \\
    \st \quad & {\xbf{u}{i}^\top\xbf{u}{j} = \delta_{ij}}, \forall i, j = 1,\cdots, d.
\end{aligned}
\end{equation}

We use the above objective throughout the rest of our paper, and conduct ablative experiments with other choices of the coefficients (see Sec.\ref{subsubsec:effectiveness of generalized objectives}).

Note that Theorem~\ref{thm:optimiality and uniqueness} implies a property of the generalized graph drawing objective in Eqn.~\eqref{eqn:ggd}: there is one-to-one correspondence between its solutions and the smallest $d$ eigenvectors, \ie, $\fthei^* = \ethei, \forall i \in \{1,\cdots,d\}$.
With this, a specific dimension (\eg, the 2nd dimension) of the Laplacian representation can be easily derived from the corresponding solution (\eg, $\xbf{u}{2}$). 
This exact correspondence is useful for studying how each dimension of the representation influences an RL task, \eg, reward shaping (see Sec.~\ref{subsec: exp:reward shaping}). 
Note that the spectral graph drawing objective does not have such a property.

The above theoretical results can be easily generalized to the function space (\ie, Hilbert space), which corresponds to a continuous state space in RL (see Appendix).

\textbf{Training objective} \quad In RL applications, it is hard to directly optimize the problem~\eqref{eqn:sgd} because $L$ is not accessible and enumerating the state space may be infeasible. To make the optimization amenable, we follow the practice in~\cite{wu2018laplacian} to express the objective as an expectation. 
The objective in Eqn.~\eqref{eqn:sgd} can be rewritten as
\begin{equation}
\label{eqn:sgd_rewrite}
  \sum_{k=1}^d\sum_{i=1}^k \mathbf{u}_i^\top L \mathbf{u}_i
   = \sum_{k=1}^d\sum_{i=1}^k \sum_{(s,s')\in \mathcal{E}} (\mathbf{u}_i[s] - \mathbf{u}_i[s'])^2
\end{equation}
where the inner summation of the right hand side is over all edges (\ie transitions) in the graph, and $\mathbf{u}_i[s]$ denotes the entry of vector $\mathbf{u}_i$ corresponding to state $s$. 
In practice, we train a neural network $\phi(s)$ with $d$-dimension output $[f_1(s),\cdots,f_d(s)]$ to approximate the Laplacian representation for state $s$. 
Since we only have sampled transitions, we can express Eqn.~\eqref{eqn:sgd_rewrite} as an expectation and minimize the following objective
\begin{equation}
\label{eqn:training obj}
    G(f_1,\cdots,f_d)\, \triangleq \, \mathbb{E}_{(s,s')\sim \mathcal{T}} \sum_{k=1}^d \sum_{i=1}^k \left(f_i(s) - f_i(s')\right)^2
\end{equation}
where $(s, s')$ is a state-transition sampled from a dataset of transitions $\mathcal{T}$.

The orthonormal constraints in Eqn.~\eqref{eqn:sgd} can be implemented as a penalty term:
\begin{equation}
\label{eqn: penalty}
    \mathbb{E}_{s\sim\rho, s'\sim\rho} \sum_{l=1}^d \sum_{j=1}^l \sum_{k=1}^l h_{jk}(s, s')
\end{equation}
where
\begin{equation}
    h_{jk}(s, s') = \left(f_j(s)f_k(s) - \delta_{jk}\right) \left(f_j(s')f_k(s') - \delta_{jk}\right).
\end{equation}
Here $\rho$ denotes the distribution of states in $\mathcal{T}$. Please refer to~\cite{wu2018laplacian} and the Appendix for  detailed derivation.

\section{Experiments}
\label{sec:exp}

\begin{figure}[t]
     \centering
    \includegraphics[width=\linewidth]{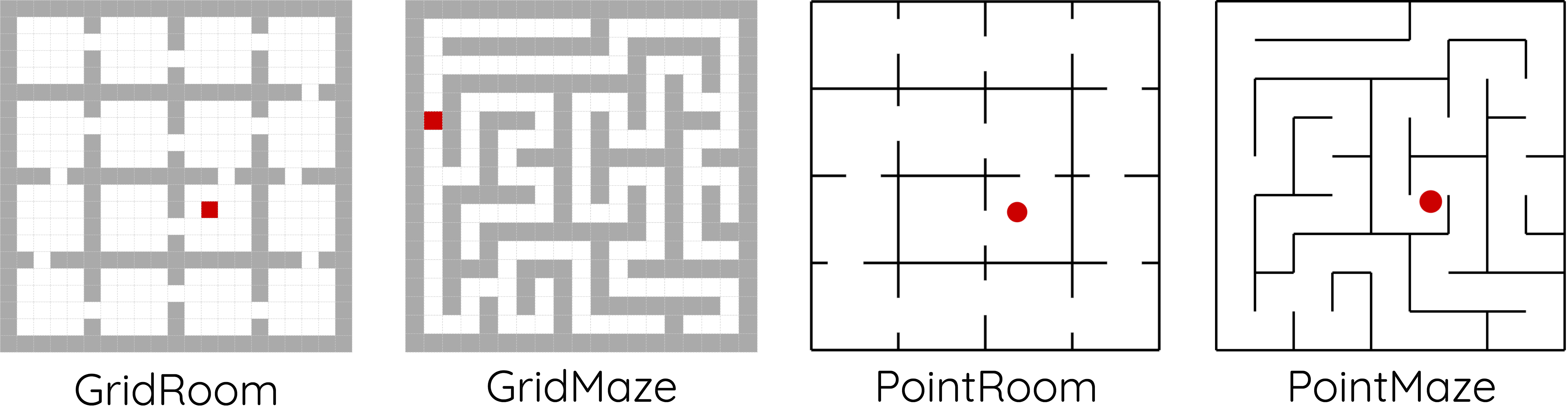}
    \vskip -0.05in
    \caption{Environments for experiments (agents depicted in red).}
     \label{fig:environments}
     \vskip -0.05in
\end{figure}

In this section, we conduct extensive experiments to validate the effectiveness of our method in improving learned Laplacian representation.
Specifically, in Sec.~\ref{subsec: exp:laprep}, we evaluate the learned representations on how well they approximate the ground truth.
In Sec.~\ref{subsec: exp:option} and Sec.~\ref{subsec: exp:reward shaping}, we evaluate the learned representations on their effectiveness in two downstream tasks, \ie for discovering exploratory options and improving reward shaping. 
Finally, in Sec.~\ref{subsec: exp:analysis}, we empirically verify the assumptions used in Theorem~\ref{thm:optimiality and uniqueness} and evaluate other coefficient choices.

We use two discrete gridworld environments and two continuous control environments for our experiments (see Fig.~\ref{fig:environments}), following previous work~\cite{wu2018laplacian}.
The gridworld environments are built with MiniGrid~\cite{gym_minigrid} and the continuous control environments are created with PyBullet~\cite{coumans2019}.
Note that for gridworld environments, our setting is not tabular, since we approximate the Laplacian representation via training neural networks on raw observations (such as $(x, y)$ positions or top-view images) rather than learning a mapping table for all states.
For all experiments, we use $d=10$ for the dimension of the Laplacian representation. More details about training setup can be found in the Appendix. 
For clarity, throughout the experiments we use \emph{baseline} to refer to the method of~\cite{wu2018laplacian}.

\subsection{Learning Laplacian Representations}
\label{subsec: exp:laprep}

\begin{figure*}[t]
     \centering
    \includegraphics[width=\linewidth]{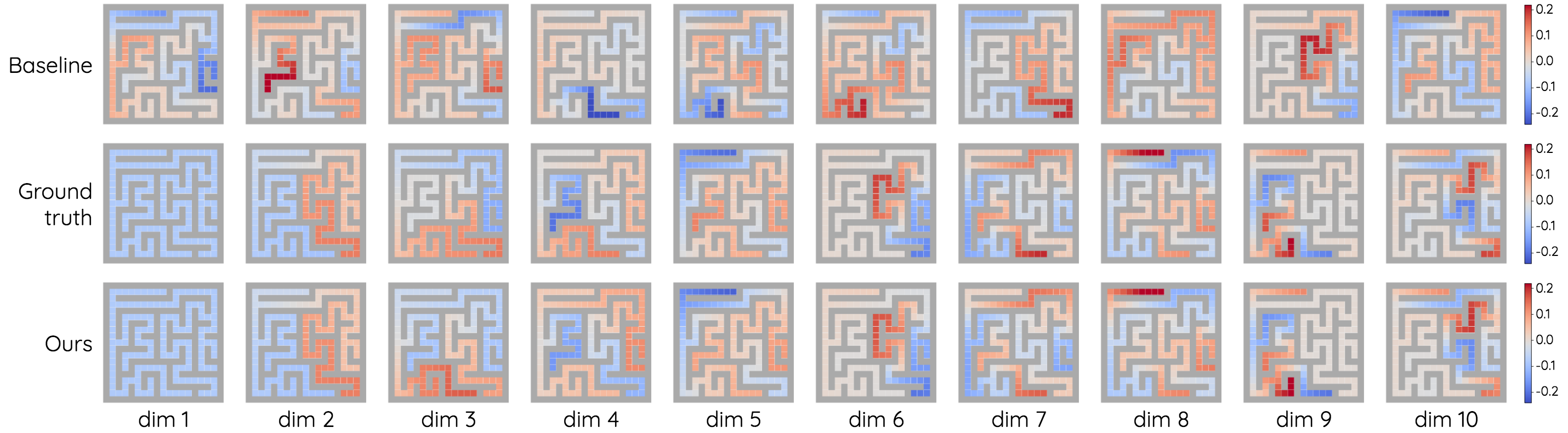}
    \vskip -0.05in
    \caption{Visualization of the learned 10-dimension Laplacian representation and the ground truth on \texttt{GridMaze}.
    Each heatmap shows a dimension of the representation for all states in the environment, where each state is a single cell. Best viewed in color.}
     \label{fig:vis-gridmaze}
\end{figure*}

\begin{figure*}[t]
     \centering
    \includegraphics[width=\linewidth]{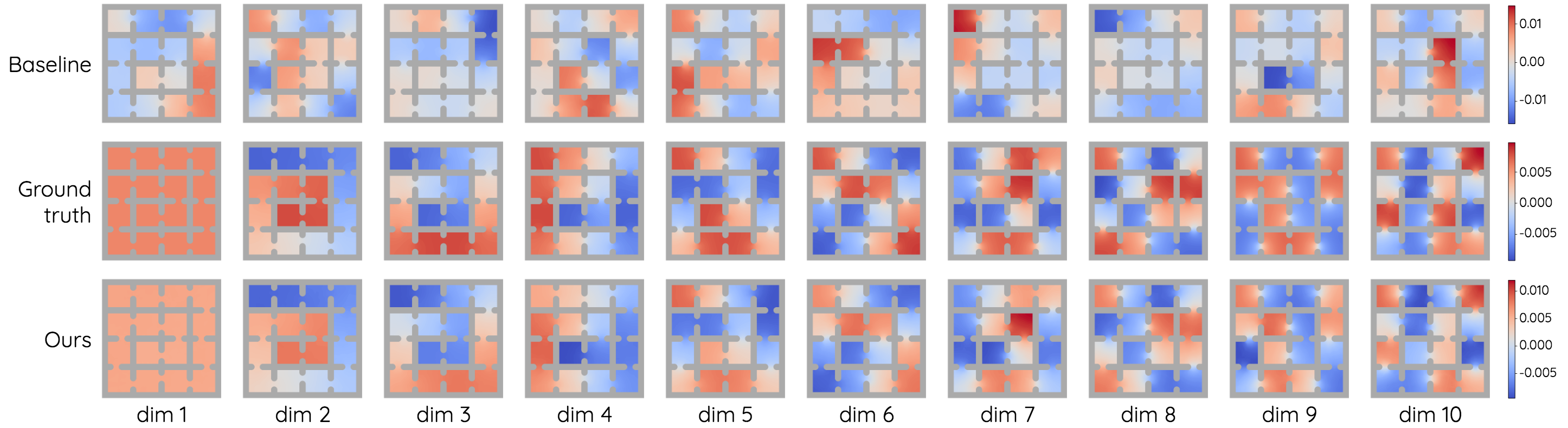}
    \vskip -0.05in
    \caption{Visualization of the learned 10-dimension Laplacian representations and the ground truth on \texttt{PointRoom}.
    Each heatmap shows a dimension of the representation for all states (via interpolation) in the environment. Best viewed in color.}
     \label{fig:vis-pointroom}
     \vskip -0.05in
\end{figure*}

\begin{table*}[t!]
    \centering
    \caption{$\mathrm{SimGT}$ between learned representation and ground truth, averaged across 3 runs. (image) denotes using image observations.}
    \vskip 0.1in
    \begin{tabular}{c c c c c c c}
    \toprule
         Environment & GridRoom & GridMaze & GridRoom (image) & GridMaze (image) & PointRoom & PointMaze \\
    \midrule
    Baseline & 0.239 & 0.220 & 0.310 & 0.229 & 0.239 & 0.255\\
    Ours & 0.991 & 0.962 & 0.985 & 0.984 & 0.963 & 0.779 \\
    \bottomrule
    \end{tabular}
    \label{tab:cosine similarity}
\end{table*}

We take~\cite{wu2018laplacian} as our baseline and following its practice, we also train a neural network to approximate the Laplacian representation, using trajectories collected by a uniformly random policy with random starts. For the environments with discrete state-space (\texttt{GridRoom} and \texttt{GridMaze}), we conduct experiments with both $(x,y)$ position observation and image observation. The ground-truth Laplacian representations (\ie, eigenvectors) are computed by eigendecomposing the graph Laplacian matrix.
For environments with continuous state spaces (\texttt{PointRoom} and \texttt{PointMaze}), we use $(x,y)$ positions as observations, and the ground-truth representations (\ie, eigenfunctions) are approximated by the finite difference method with 5-point stencil~\cite{Knabner2003}. Please see the Appendix for more training details.

To get an intuitive comparison between our method and the baseline in approximating the Laplacian representation, we visualize the learned state representations as well as the ground truth ones of \texttt{GridMaze} and \texttt{PointRoom} in Fig.~\ref{fig:vis-gridmaze} and Fig.~\ref{fig:vis-pointroom}.  As the figures show, our learned Laplacian representations approximate the ground truth much more accurately, while the baseline representations significantly diverge from the ground truth. Similar results in other environments are included in the Appendix.

To quantify the approximation quality of the learned representations, we calculate the absolute dimension-wise cosine similarities between the learned representations and ground-truth ones, and take the average over all dimensions, which yields the following
$\mathrm{SimGT}$ metric:
\begin{equation}
\label{eqn: simGT}
\begin{aligned}
\mathrm{SimGT} = \frac{1}{d}\sum_{i=1}^d \left|\sum_{s}f_i(s)\xbf{e}{i}[s]\right|,
\end{aligned}
\end{equation}
where $s$ is a state, $f_i(s)$ is the $i$-th dimension of the learned representation of $s$ (defined in Sec.~\ref{sec:method}), and $\xbf{e}{i}[s]$ is the $i$-th dimension of the ground truth for $s$ (defined in Sec.~\ref{subsec: background: laprep}) respectively. Note that $\mathrm{SimGT}$ is in the range $[0,1]$, and larger $\mathrm{SimGT}$ means that the representation is closer to the ground truth. As shown in Tab.~\ref{tab:cosine similarity}, our method achieves much higher $\mathrm{SimGT}$ than the baseline, indicating better approximation.

Moreover, we provide empirical evidence for our discussion in Sec.~\ref{sec:method}, that our method converges to the unique global minimizer, while the baseline method can converge to different minima. 
To illustrate this, we visualize the learned representations in 3 different runs, and use the following $\mathrm{SimRUN}$ metric to measure the variance between the representations learned in $l$-th run and those learned in the $m$-th run ($1\leqslant l,m\leqslant3$) via the following:
\begin{equation}
\label{eqn: simRUN}
    \begin{aligned}
    \mathrm{SimRUN}(l,m) = \frac{1}{d}\sum_{i=1}^d\left|\sum_{s}f_i^l(s)f_i^m(s)\right|,
    \end{aligned}
\end{equation}
where $f_i^l(s)$ and $f_i^m(s)$ denote the $i$-th dimension of the learned representation of state $s$ in the $l$-th and $m$-th run, respectively. $\mathrm{SimRUN}(l,m)$ is in the range $[0,1]$, and larger value implies larger inconsistency in the learned representations between 2 runs. As Fig.~\ref{fig:large variance} shows, the learned representations of the baseline method vary a lot across runs, indicating convergence to different minima.
In contrast, our method yields consistent approximations.

The above results demonstrate the superiority of learning Laplacian representation with our proposed objective and empirically support our theoretical analysis in Sec.~\ref{sec:method}.

\begin{figure}[t]
    \includegraphics[width=\linewidth]{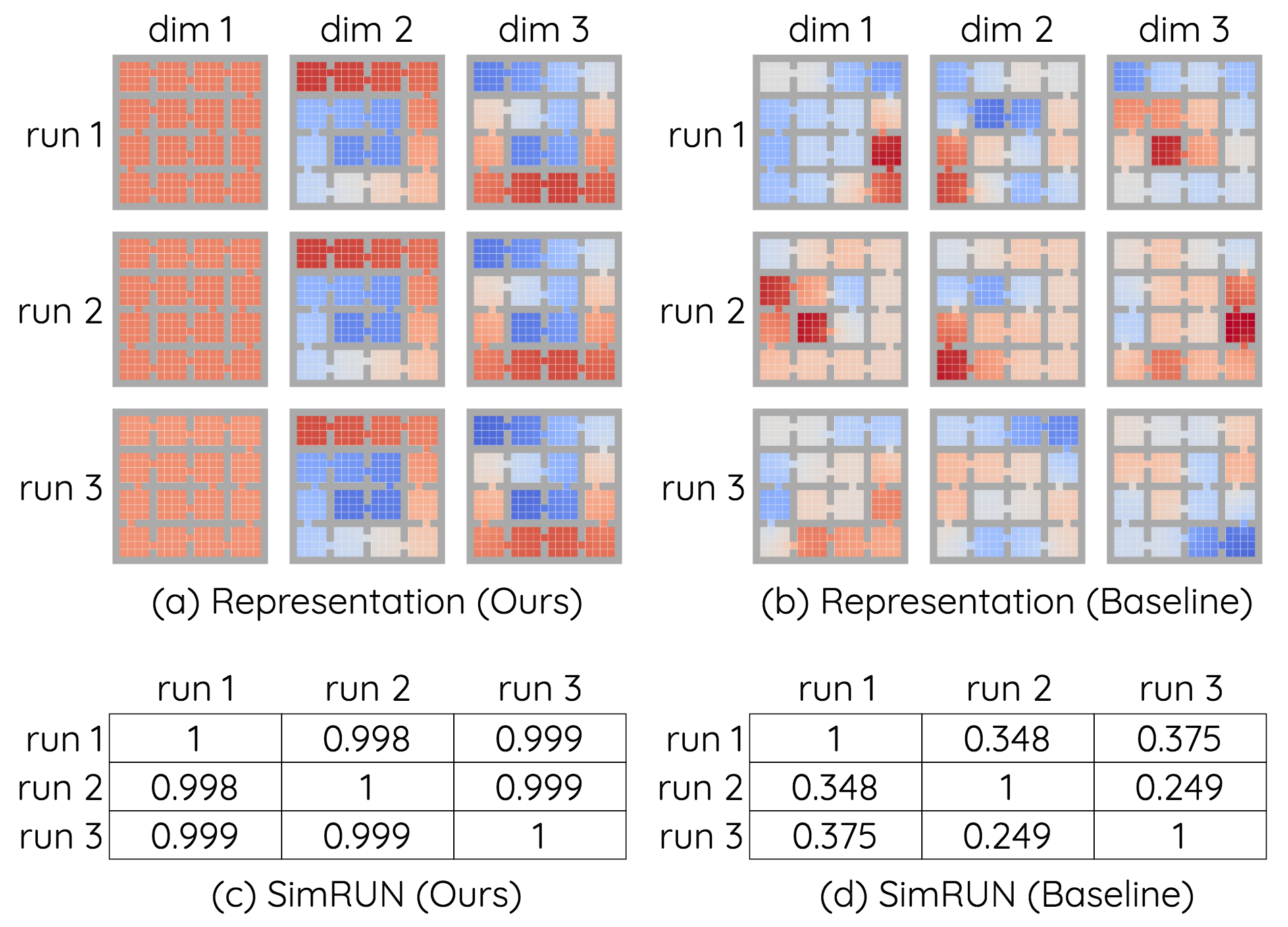}
    \vskip -0.05in
    \caption{(a) and (b): Visualization of first 3 dimensions of representations learned by our method~(a) and baseline method~(b). (c) and (d): $\mathrm{SimRUN}(l,m)$ for $l,m =1,2,3$ computed with our learned representations~(c) and those learned by baseline~(d).}
     \label{fig:large variance}
     \vskip -0.05in
\end{figure}

\subsection{Option Discovery}
\label{subsec: exp:option}

As discussed in Sec.~\ref{subsec: background: laprep}, the Laplacian representation can be applied in discovering exploratory options.
We here evaluate effectiveness of our learned representations in discovering exploratory options, to further show the superiority of our method over the baseline.

Following~\cite{machado2017laplacian}, we learn 2 options for each dimension of the learned representation: one with an intrinsic reward function $r_i(s,s')=f_i(s') - f_i(s)$ and the other with $-r_i(s,s')$, where $f_i(s)$ denotes $i$-th dimension of the representation for state $s$ (see Sec.~\ref{sec:method}).
The options are learned with Deep Q-learning~\cite{mnih2013playing}.
Since the first dimension of the Laplacian representation has the same value for every state (see Sec.~\ref{subsec: background: laprep}), it cannot provide informative intrinsic reward. 
Therefore, we do not learn options for the first dimension of our learned representation.

For each learned option, we compute the average trajectory length for an agent that starts from each state and follows this option until arriving at termination states.
This reflects the time scale at which an option acts: options acting at longer time scales enable agents to quickly reach distant areas, and shorter options ensure sufficient exploration in local areas. 
As Fig.~\ref{fig:option length} shows, trajectory lengths for our method vary across different dimensions, implying the options operate at different time scales.
Such options enable exploration in both nearby and distant areas. 
In contrast, options discovered from the baseline representation operate at similarly short time scales, which may hinder exploration to the distant areas.

To further validate this, we measure the expected number of steps required for an agent (equipped with learned options) to navigate between different rooms in the \texttt{GridRoom} environment.
Specifically, we denote $N_{i\to j}$ as the average number of steps required for an agent starting from room $i$ to reach room $j$. We then calculate $N_{i,j} = (N_{i\to j} + N_{j\to i})/2$ as the expected number of steps required to navigate between two rooms. 
As shown in Fig.~\ref{fig:hitting time}, agents equipped with options discovered from the baseline representation typically take more steps to reach distant rooms.
In comparison, with our method, agents can reach faraway rooms within a similar number of steps as for nearby rooms.

\begin{figure}[t]
     \centering
    \includegraphics[width=\linewidth]{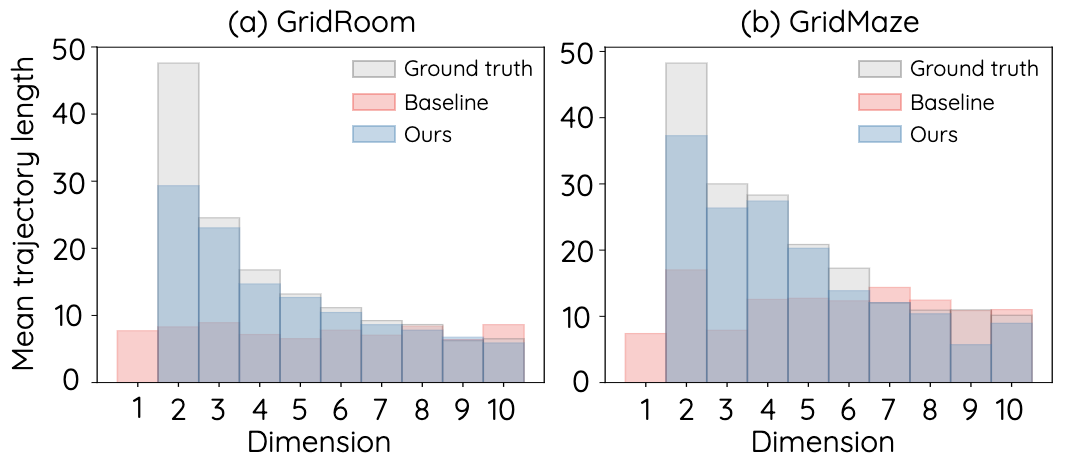}
    \vskip -0.05in
    \caption{Average length of trajectories for options discovered from each dimension of different representations. Longer length implies the option acts at a longer time scale. For each dimension, the result is averaged from corresponding 2 options.}
     \label{fig:option length}
     \vskip -0.1in
\end{figure}

\begin{figure}[t]
\label{fig:hitting time}
     \centering
    \includegraphics[width=\linewidth]{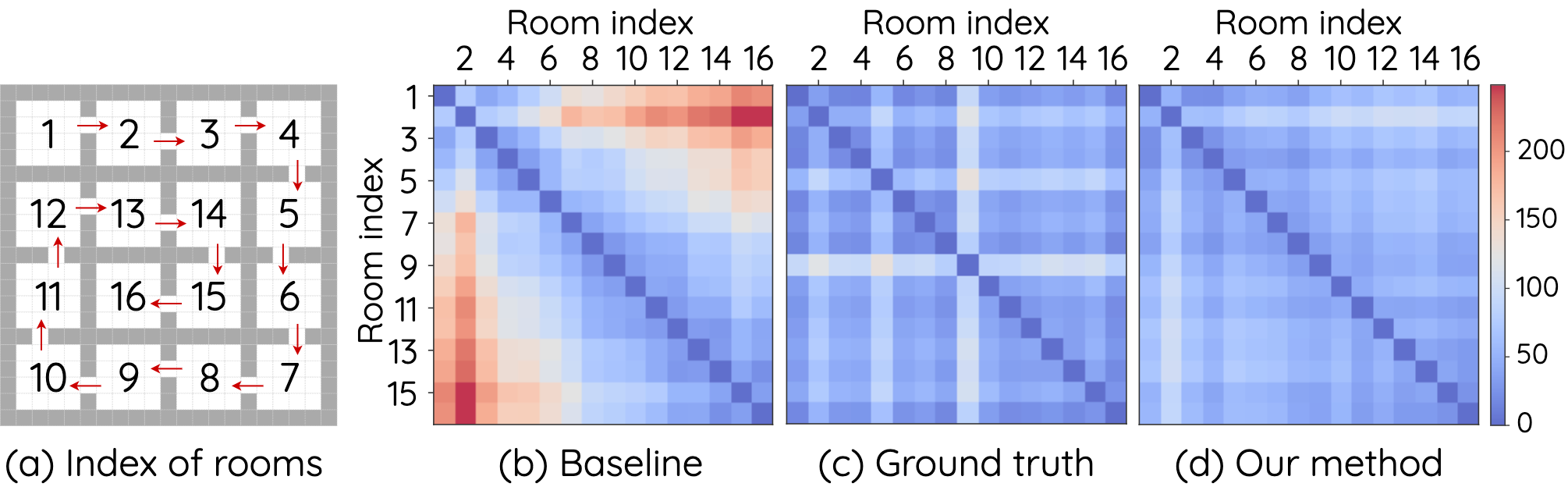}
    \vskip -0.05in
    \caption{(a) Index of rooms. Neighboring rooms are assigned with consecutive indexes. Larger difference between two indexes means that the corresponding rooms are farther in the environment. (b) Average steps needed to navigate between two rooms (baseline). The agent needs many steps to navigate between distant rooms, \eg, room 1 and room 16. (c) Average steps needed to navigate between two rooms (ground truth). (d) Average steps needed to navigate between two rooms (our method).}
    \vskip -0.1in
\end{figure}

\subsection{Reward Shaping}
\label{subsec: exp:reward shaping}

The Laplacian representation can be used for reward shaping
in goal-achieving tasks, as mentioned in Sec.~\ref{subsec: background: laprep}.
Previous work~\cite{wu2018laplacian} uses all $d$ dimensions of the learned representation to define the pseudo-reward, \ie, $r_t = -\|\phi(s_{t+1}) - \phi(s_\text{goal})\|_2$, where $\phi(s)$ is the $d$-dimension representation of state $s$ output by a neural network.
Such pseudo-reward is influenced by each dimension of the representation.
As each dimension of Laplacian representation captures different geometric information about the state space (\eg, see Fig.~\ref{fig:vis-pointroom}), a natural question is: which dimension of the Laplacian representation matters more to reward shaping? Furthermore, can we achieve better reward shaping than using all $d$ dimensions?
\begin{figure}[t]
     \centering
    \includegraphics[width=\linewidth]{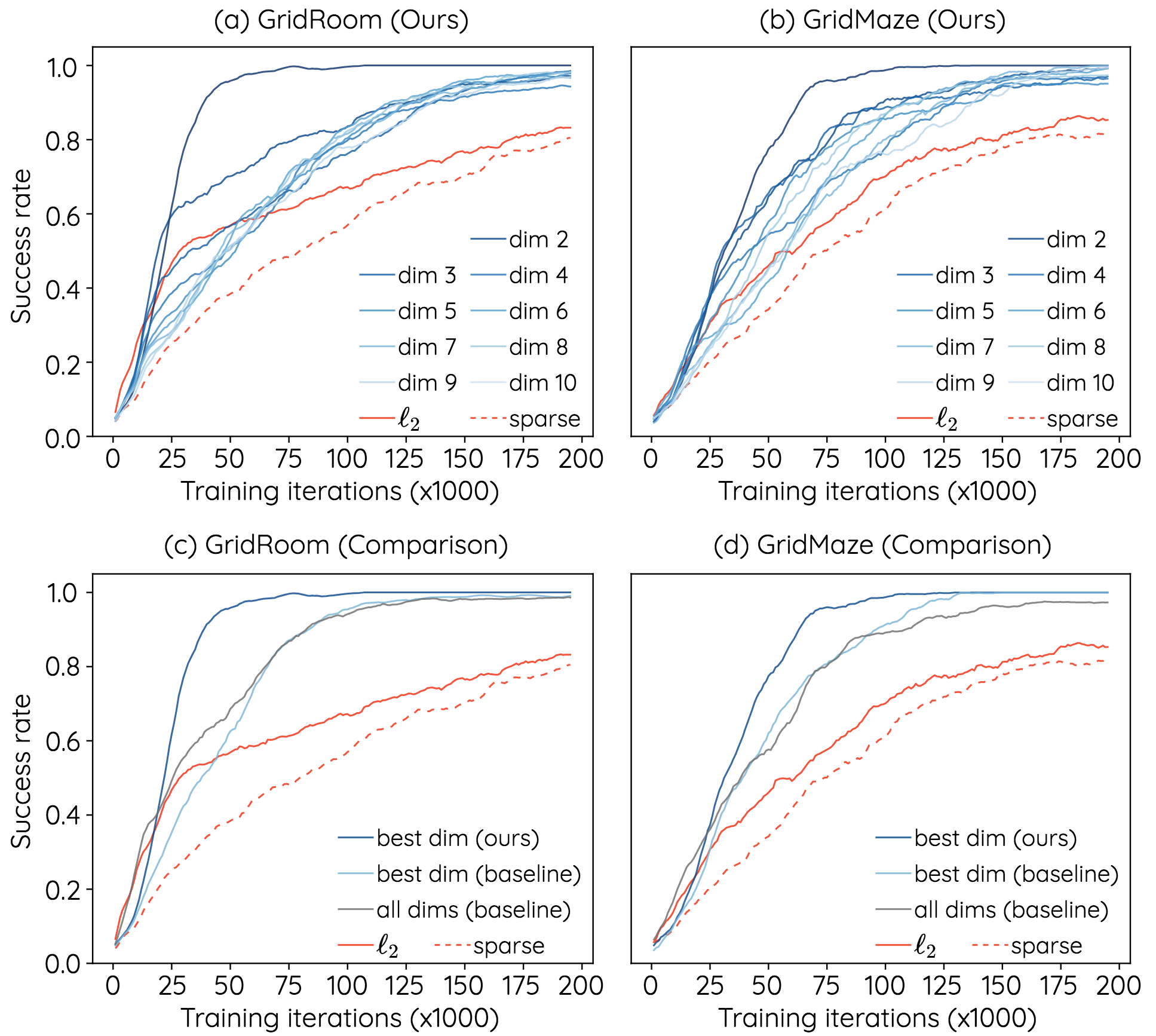}
    \vskip -0.05in
    \caption{Results of reward shaping with each dimension of learned Laplacian representations. $\ell_2$ denotes reward shaping with L2 distance in raw observation space (\ie, $(x,y)$ position), and \emph{sparse} denotes no reward shaping.}
     \label{fig:reward shaping}
     \vskip -0.1in
\end{figure}

In this subsection, we study these questions by comparing individual dimensions of the learned representation for reward shaping. Specifically, we define the pseudo-reward as $r_t = -\left(f_i(s_{t+1}) - f_i(s_\text{goal})\right)^2$ where $f_i(s)$ denotes $i$-th dimension of the representation for state $s$.
Following~\cite{wu2018laplacian}, we train the agents using Deep Q-learning~\cite{mnih2013playing} with $(x, y)$ positions as observations, and measure the agent's success rate of reaching the goal state. 
To eliminate the bias brought by the goal position, we select multiple goals for each environment such that they spread over the state space, and average the results of different goals. 
We do not experiment with the first dimension of our learned representation since every state has same value and the pseudo-reward is always 0.

As shown in Fig.~\ref{fig:reward shaping}(a) and Fig.~\ref{fig:reward shaping}(b), using lower dimensions of our learned Laplacian representation for reward shaping better accelerates the agent's learning process.
Furthermore, we compare using best dimension and all dimensions, of both our learned representation and the baseline representation, in Fig.~\ref{fig:reward shaping}(c) and Fig.~\ref{fig:reward shaping}(d). Results show that the lower dimension of our representation significantly outperforms others, further improving reward shaping.
The above results suggest that lower dimensions of the Laplacian representation are more important to reward shaping.
By learning a high-quality Laplacian representation, our method enables one to more easily choose eigenvectors to use for reward shaping, which leads to improved performance.

\begin{figure}[t]
     \centering
    \includegraphics[width=\linewidth]{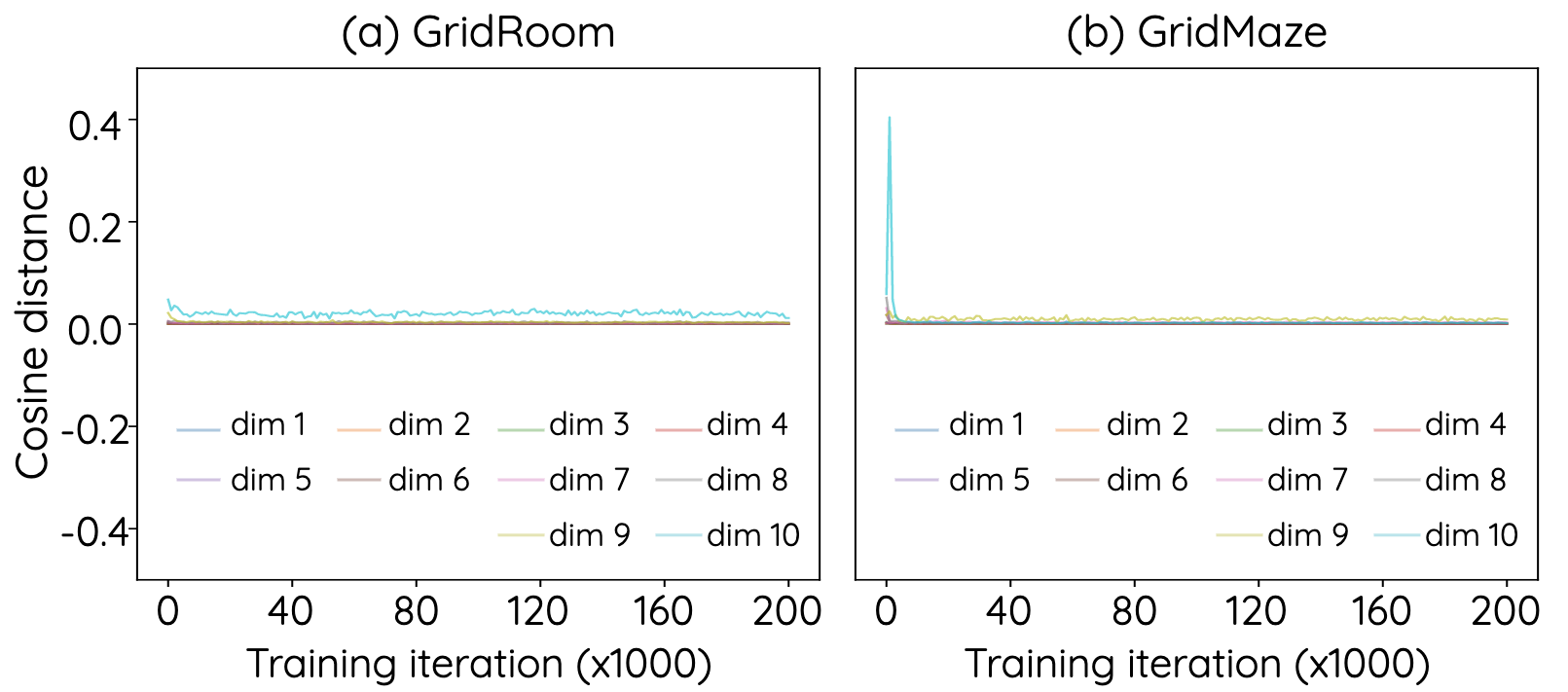}
    \vskip -0.05in
    \caption{Sum of absolute cosine similarities between $\mathbf{u}_i$ and eigenvectors $\mathbf{e}_1, \cdots,\mathbf{e}_d$, during training.}
     \label{fig:lie in span}
     \vskip -0.05in
\end{figure}

\begin{figure}[t]
     \centering
    \includegraphics[width=\linewidth]{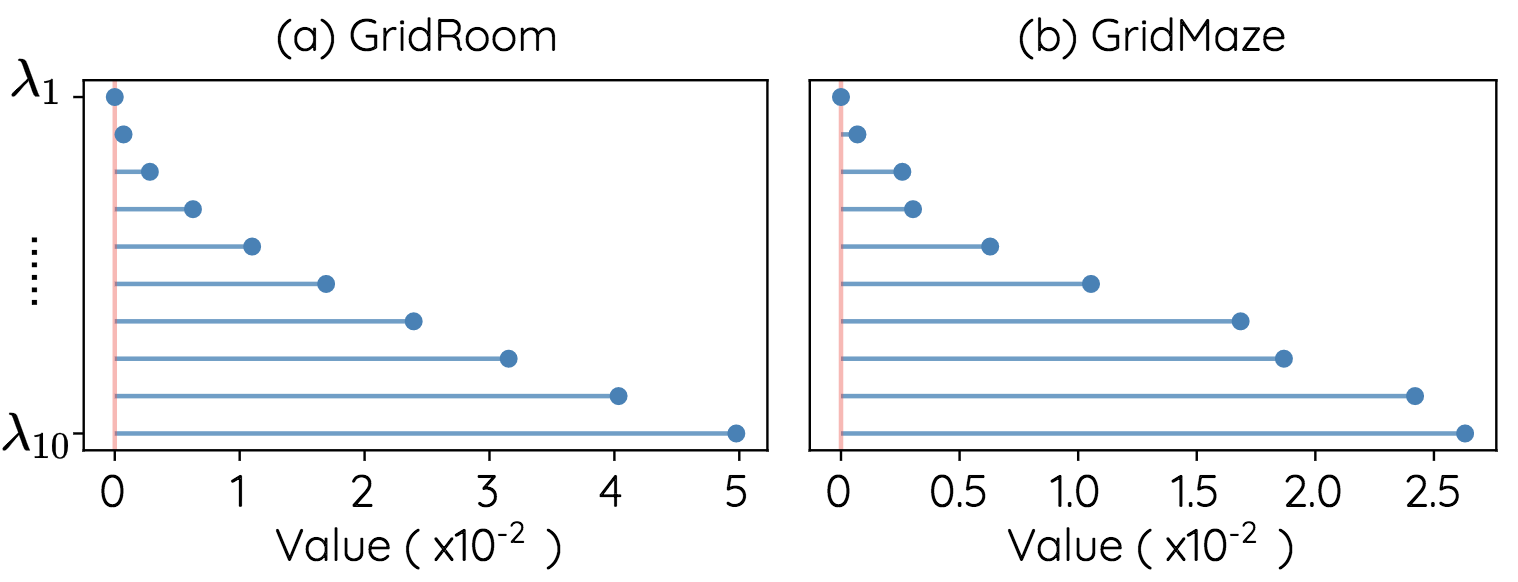}
    \vskip -0.05in
    \caption{Eigenvalues of the graph Laplacian matrix for two environments with discrete state-space.}
     \label{fig:distinct eigvals}
     \vskip -0.05in
\end{figure}

\subsection{Analysis}
\label{subsec: exp:analysis}
Here we first empirically verify the two assumptions in Theorem~\ref{thm:optimiality and uniqueness}, and then conduct ablative experiments with different choices of coefficients $c_1,\cdots,c_d$ for our generalized graph drawing objective~\eqref{eqn:ggd}. 
We use \texttt{GridRoom} and \texttt{GridMaze} environments for experiments in this subsection.

\subsubsection{Verification of assumptions}
\label{subsubsec:verification of assumptions}
The first assumption requires that all optimization variables $\fthei$ lie in the span space of the $d$ smallest eigenvectors during the optimization process, \ie, $\forall i \in \mathrm{span}(\{\eone,\cdots,\ed\})$.

We empirically verify a necessary and sufficient condition for this assumption: for each $\fthei$, the angle between it and its projection onto $\mathrm{span}(\{\eone,\cdots,\ed\})$ (denoted as $\hat{\mathbf{u}}_i \triangleq \sum_{j=1}^{d}(\mathbf{e}_j^\top\mathbf{u}_i)\mathbf{e}_j$) is 0. Specifically, we compute the cosine distance between $\fthei$ and $\hat{\mathbf{u}}_i$ during training. As Fig.~\ref{fig:lie in span} shows, the cosine distance is close to 0 during the whole training process, which implies that the assumption holds in our experiments.

To verify whether the second assumption holds, \ie whether the $d$ smallest eigenvalues of the graph Laplacian are distinct: $\lambda_1 > \cdots > \lambda_d$, we calculate the smallest $d$ eigenvalues of the graph Laplacian of our environments, and plot them in Fig.~\ref{fig:distinct eigvals}.
It is clear that the eigenvalues are distinct, demonstrating the validity of this assumption.

\subsubsection{Evaluation on other coefficient choices}
\label{subsubsec:effectiveness of generalized objectives}
In this above experiments, we choose the coefficients of our generalized graph drawing objective to be $c_i=(d-i+1)$. In this subsubsection, we evaluate the effectiveness of other choices of $c_1,\cdots,c_d$.

Specifically, we select two groups of $c_i$ that are different from the default group (\ie, $c_i=(d-i+1)$ as used in Eqn.~\ref{eqn:sgd}), of which the first group has increasing first-order difference (\ie, $c_{i+1} - c_{i} > c_{i} - c_{i-1}$), while the second group has decreasing first-order difference (\ie, $c_{i+1} - c_{i} < c_{i} - c_{i-1}$). We plot the two groups in Fig.~\ref{fig:coefficients}(a). For comparison, we also include the default coefficient group, which has a constant first-order difference of 1. We then train neural networks with the generalized objective in Eqn.~\eqref{eqn:ggd} using group~1 and group~2 (other experimental settings are the same with Sec.~\ref{subsec: exp:laprep}). We use the similarity between learned representations and ground truth to evaluate the quality of the representations, as done in Sec.~\ref{subsec: exp:laprep}. As can be seen from Fig.~\ref{fig:coefficients}(b), representations learned with objectives using group~1 or group~2 are as good as those learned by the default setting. 
\begin{figure}[t]
     \centering
    \includegraphics[width=0.95\linewidth]{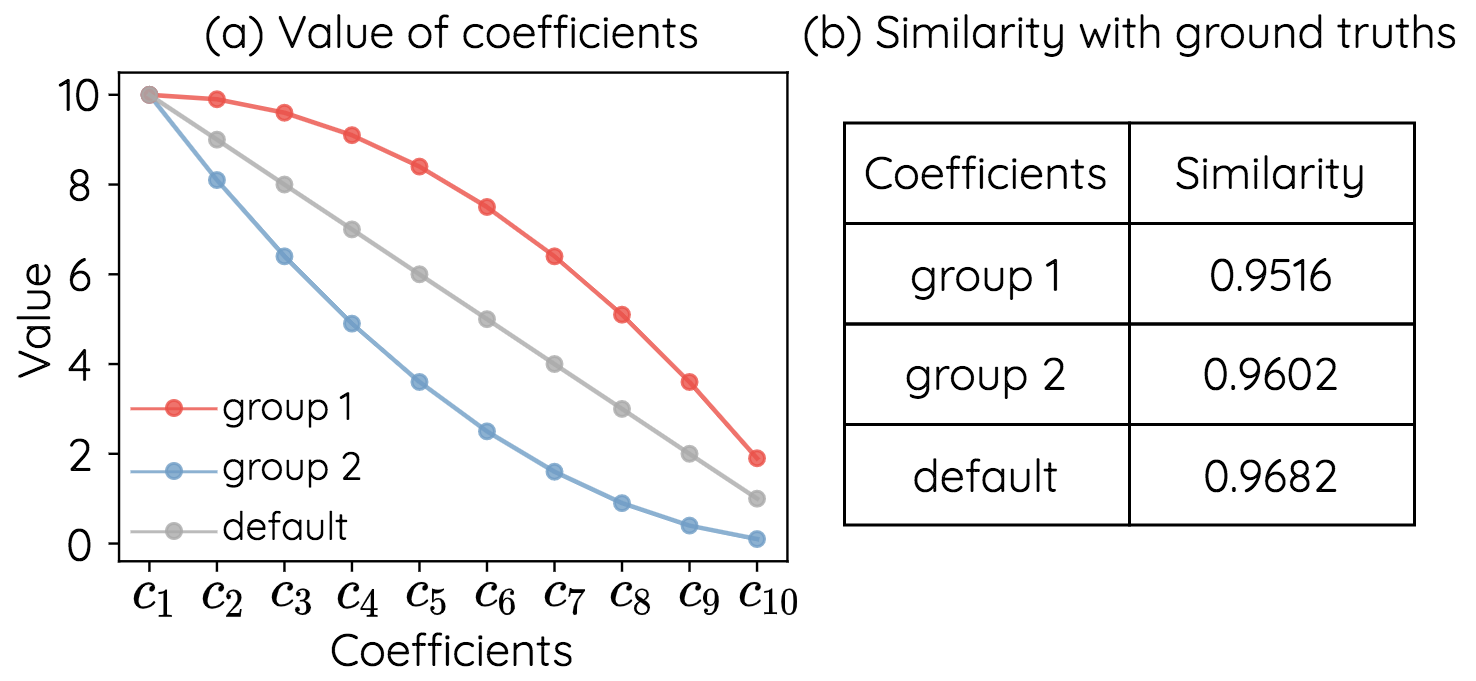}
    \vskip -0.05in
    \caption{(a) Coefficient values of different groups. (b) Absolute cosine similarity (averaged across dimensions) between our learned representation and ground truth.}
     \label{fig:coefficients}
     \vskip -0.2in
\end{figure}

\section{Related Works}
\label{sec:related works}

By viewing the state transition process in RL as a graph where nodes are states and edges are transitions, previous works build a Laplacian-based state representation and successfully apply it in value function approximation~\cite{mahadevan2005proto}, option discovery~\cite{machado2017laplacian} and reward shaping~\cite{wu2018laplacian}.

\citet{mahadevan2005proto} proposes proto-value functions, viewing the Laplacian representations as basis state representations, and use them to approximate value functions. Recently, \citet{machado2017laplacian} introduce a framework for option discovery, which builds options from the Laplacian representation. They show that such options are useful for multiple tasks and helpful in exploration. Later, \citet{machado2018eigenoption} extend their Laplacian option discovery framework to settings where handcrafted features are not available, based on an connection between proto-value functions~\cite{mahadevan2005proto} and successor representations~\cite{dayan1993improving, stachenfeld2014design, barreto2016successor}. \citet{jinnai2019exploration} leverage the approximated Laplacian representation to learn deep covering options for exploration.

Our work focuses on better approximating the Laplacian representation in environments with large state space. Most related to our method is the approach proposed in~\cite{wu2018laplacian}. The authors optimize a spectral graph drawing objective~\cite{koren2005drawing} to approximate the eigenvectors. Though being efficient, their method has difficulties in learning a Laplacian representation close to the ground truth, due to the fact that their minimization objective has infinitely many other global minimizers besides the eigenvectors. Our work improves their method by proposing a new objective, which admits eigenvectors as its unique global minimizer and hence greatly increases the approximation quality in empirical evaluations. Other approaches for approximating the Laplacian representation include performing singular value decomposition on the incidence matrix~\cite{machado2017laplacian, machado2018eigenoption}, training neural networks with constrained stochastic optimization~\cite{shaham2018spectralnet} or bi-level stochastic optimization~\cite{pfau2018spectral}. However, as discussed in~\cite{wu2018laplacian}, these approaches either require expensive matrix operations or suffer poor scalability.

\section{Conclusion}
\label{sec:conclusion}

Laplacian representation provides a succinct and informative state representation for RL, which captures the geometry of the underlying state space. Such representation is beneficial in discovering exploratory options and reward shaping. In this paper, we propose a new objective that greatly improves the approximation quality of the learned Laplacian, for environments with large or even continuous state space. We demonstrate the superiority of our method over previous work via theoretical analysis and empirical evaluation. Our method is efficient and simple to implement. With our method, one can learn high-quality Laplacian representation and apply it to various RL tasks such as option discovery and reward shaping.

\section{Acknowledgements}

Jiashi Feng is supported by the National Research Foundation, Singapore under its AI Singapore Programme (AISG Award No: AISG-100E-2019-035), Singapore National Research Foundation (“CogniVision” grant NRF-CRP20-2017-0003).

\bibliography{example_paper}
\bibliographystyle{icml2021}

\clearpage

\section*{A. Proof of Theorem 1}
To prove Theorem~\ref{thm:optimiality and uniqueness}, we first introduce the following Lemma~\ref{lm:equal block}.
\begin{lemma}
\label{lm:equal block}
Let $Q=(q_{ij}) \in \mathbb{R}^{d\times d}$ be an orthogonal matrix, and $A=(a_{ij})=Q\odot Q=(q^2_{ij})$. For $k\in\{1,\cdots,d-1\}$, we have 
\begin{equation}
    \begin{aligned}
    \sum_{i=1}^k\sum_{j=k+1}^{d}a_{ij} =
    \sum_{i=1}^k\sum_{j=k+1}^{d}a_{ji}
    \end{aligned}
\end{equation}
\end{lemma}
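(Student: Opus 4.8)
The plan is to exploit the fact that $A = Q \odot Q$ is a \emph{doubly stochastic} matrix and then read off the claimed equality as a balance between two complementary off-diagonal block sums. The statement compares the sum over the top-right block of $A$ (indices $i \leqslant k < j$, entry $a_{ij}$) with the sum over the bottom-left block (the same index ranges, but the transposed entry $a_{ji}$), so it is really the assertion that these two blocks have equal total mass.

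First I would record the two elementary consequences of orthogonality of $Q$. Since the rows of $Q$ are orthonormal, $\sum_{j=1}^d q_{ij}^2 = 1$ for each $i$, so every row of $A$ sums to $1$; and since the columns of $Q$ are also orthonormal, $\sum_{i=1}^d q_{ij}^2 = 1$ for each $j$, so every column of $A$ sums to $1$. Thus $A$ is a nonnegative doubly stochastic matrix, and this is the only structural input the proof needs.

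Next I would partition $A$ at the cut between indices $k$ and $k+1$ into four blocks and denote their total entry-sums by $S_{11}, S_{12}, S_{21}, S_{22}$ (top-left, top-right, bottom-left, bottom-right). The left-hand side of the lemma is exactly $S_{12}$, and the right-hand side $\sum_{i=1}^k\sum_{j=k+1}^d a_{ji}$, having row index $j > k$ and column index $i \leqslant k$, is exactly $S_{21}$. Summing the row-sum identity over the first $k$ rows gives $S_{11} + S_{12} = k$, while summing the column-sum identity over the first $k$ columns gives $S_{11} + S_{21} = k$. Subtracting these two equations cancels $S_{11}$ and yields $S_{12} = S_{21}$, which is the claim.

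I do not anticipate a genuine obstacle here: the argument is a one-line counting identity once double stochasticity is in hand. The only things to get right are the bookkeeping that the right-hand side corresponds to the bottom-left block (rather than the top-right), and the observation that \emph{both} the row and the column orthonormality of $Q$ are required — row orthonormality alone controls only the row sums and would not suffice to equate the two blocks.
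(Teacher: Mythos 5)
Your proof is correct and is essentially the same as the paper's: both deduce from the orthogonality of $Q$ that $A$ is doubly stochastic, so the first $k$ rows and the first $k$ columns of $A$ each sum to $k$, and then cancel the shared top-left block to equate the two off-diagonal block sums. The only difference is presentational (you name the block sums $S_{11},S_{12},S_{21}$, while the paper writes the subtraction out as a chain of equalities).
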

\begin{proof}
Since $Q$ is an orthogonal matrix, we know that the sum of $A$'s first $k$ rows is equal to that of $A$'s first $k$ columns, \ie:
\begin{equation}
    \begin{aligned}
    \sum_{i=1}^{k}\sum_{j=1}^{d}a_{ij} = \sum_{i=1}^{k}\sum_{j=1}^{d}a_{ji} = k
    \end{aligned}
\end{equation}
Therefore, we have
\begin{equation}
    \begin{aligned}
    \sum_{i=1}^k\sum_{j=k+1}^d a_{ij} = & \sum_{i=1}^k\sum_{j=1}^d a_{ij} - \sum_{i=1}^k\sum_{j=1}^k a_{ij} \\
    = & \sum_{i=1}^k\sum_{j=1}^d a_{ji} - \sum_{i=1}^k\sum_{j=1}^k a_{ij} \\
    = & \sum_{i=1}^k\sum_{j=1}^d a_{ji} - \sum_{i=1}^k\sum_{j=1}^k a_{ji} \\
    = & \sum_{i=1}^k\sum_{j=k+1}^{d}a_{ji}
    \end{aligned}
\end{equation}
\end{proof}
If we view $A$ as a block matrix, \ie
\begin{equation}
A = 
    \begin{pmatrix}
        A_{1:k, 1:k} & A_{1:k, k+1:d}\\
        A_{k+1:d, 1:k} & A_{k+1:d, k+1:d}
    \end{pmatrix},
\end{equation}
Lemma~\ref{lm:equal block} says the sum of elements in $A_{1:k, k+1:d}$ is equal to the sum of elements in $A_{k+1:d, 1:k}$.

With Lemma~\ref{lm:equal block}, now we can prove Theorem~\ref{thm:optimiality and uniqueness} as following.
\begin{proof}
Let $E$ denote the matrix of the first $d$ eigenvectors of $d$, \ie, $E=(\eone, \cdots, \ed)$. Since $\fthei \in \mathrm{span}(\{\eone, \cdots, \ed\})$, and $\fthei^\top \xbf{u}{j}=\delta_{ij}$, we may rewrite $(\fone,\cdots,\fd)=EQ$, where $Q=(q_{ij})\in \mathbb{R}^{d\times d}$ is an orthogonal matrix. Let $A=(a_{ij})=Q\odot Q=(q_{ij}^2)$. Then, the objective of problem~\eqref{eqn:ggd} becomes:

\begin{equation}
\label{eqn:rewrite objective}
    \begin{aligned}
    h(\fone,\cdots,\fd) \triangleq & \sum_{i=1}^{d}c_i\fthei^\top L \fthei \\
    = & \sum_{i=1}^d c_i (\sum_{j=1}^d q_{ji}\xbf{e}{j})^\top L (\sum_{j=1}^d q_{ji}\xbf{e}{j}) \\ 
    = & \sum_{i=1}^d c_i \sum_{j=1}^d q_{ji}^2 \xbf{e}{j}^\top L \xbf{e}{j} \\
    = &\sum_{i=1}^d\sum_{j=1}^d c_i a_{ji}\lambda_j
    \end{aligned}
\end{equation}
We first prove optimality. Let $g$ denote the gap between the objective and $\sum_{i=1}^d c_i \lambda_i$. We have
\begin{equation}
\label{eqn:gap wrt lambda 1}
    \begin{aligned}
    g \triangleq & \sum_{i=1}^{d}c_i\fthei^\top L \fthei - \sum_{i=1}^dc_i\lambda_i \\
    = &\sum_{i=1}^dc_i\sum_{j=1}^{d}a_{ji}\lambda_j - \sum_{i=1}^dc_i\lambda_i\\
    \end{aligned}
\end{equation}
Note that $\sum_{j=1}^{d}a_{ji}=1$, then we have:
\begin{equation}
\label{eqn:gap wrt lambda 2}
    \begin{aligned}
    g = &\sum_{i=1}^dc_i\sum_{j=1}^{d}a_{ji}\lambda_j - \sum_{i=1}^dc_i\sum_{j=1}^{d}a_{ji}\lambda_i\\
    = & \sum_{i=1}^dc_i\sum_{j=1}^{d}a_{ji}(\lambda_j-\lambda_i) \\
    = & \sum_{i=1}^d\sum_{j=1}^{d}c_ia_{ji}(\lambda_j-\lambda_i) 
    \end{aligned}
\end{equation}
Let $\Delta_{ji} = \lambda_j - \lambda_i$, and $r_{ji}=c_ia_{ji}$,  then we can rewrite $g$ as:
\begin{equation}
\label{eqn:gap wrt delta 1}
    \begin{aligned}
    g = & \sum_{i=1}^d\sum_{j=1}^d r_{ji}\Delta_{ji}
    \end{aligned}
\end{equation}
Note that $\Delta_{ii}=0$ and that, for $j \geqslant i$, $\Delta_{ji} = \Delta_{i+1, i} + \Delta_{i+2, i+1}+\cdots+\Delta_{j-1, j-2}+\Delta_{j, j-1} = \sum_{k=i}^{j-1}\Delta_{k+1,k}$. We then apply Fubini's Theorem~\cite{fubini1907sugli} to $g$:
\begin{equation}
\label{eqn:gap wrt delta 2}
    \begin{aligned}
    g = & \sum_{j\geqslant i} r_{ji}\Delta_{ji} + \sum_{j\leqslant i} r_{ji}\Delta_{ji}\\
    = & \sum_{j\geqslant i}(r_{ji}- r_{ij})\Delta_{ji} \\ 
    = & \sum_{j>i}(r_{ji}-r_{ij})\sum_{k=i}^{j-1}\Delta_{k+1,k}\\
    = & \sum_{j>k\geqslant i}(r_{ji}-r_{ij})\Delta_{k+1,k} \\
    = & \sum_{k=1}^{d-1}(\sum_{i=1}^k\sum_{j=k+1}^d(r_{ji}-r_{ij}))\Delta_{k+1,k} \\
    \triangleq & \sum_{k=1}^{d-1}s_{k}\Delta_{k+1,k} 
    \end{aligned}
\end{equation}

Note that for $s_k$, we have
\begin{equation}
\label{eqn:sk term}
\begin{aligned}
s_k = &\sum_{i=1}^k\sum_{j=k+1}^d(r_{ji}-r_{ij}) \\
=& \sum_{i=1}^k\sum_{j=k+1}^d (c_ia_{ji}-c_j a_{ij})\\
=&\sum_{i=1}^kc_i\sum_{j=k+1}^da_{ji}-\sum_{j=k+1}^d c_j \sum_{i=1}^ka_{ij} \\
\geqslant & c_k\sum_{i=1}^k\sum_{j=k+1}^d a_{ji} - c_{k+1}\sum_{j=k+1}^d\sum_{i=1}^k a_{ij}
\end{aligned}
\end{equation}
According to Lemma~\ref{lm:equal block}, we know
\begin{equation}
\label{eqn:lemma says this}
    \begin{aligned}
    \sum_{i=1}^k\sum_{j=k+1}^d a_{ji} = \sum_{j=k+1}^d\sum_{i=1}^k a_{ij}
    \end{aligned}
\end{equation}
Therefore, we have
\begin{equation}
\label{eqn:sk larger than zero}
    \begin{aligned}
    s_k \geqslant (c_k - c_{k+1})\sum_{i=1}^k\sum_{j=k+1}^da_{ji} \geqslant 0.
    \end{aligned}
\end{equation}

Since $\Delta_{k+1,k} > 0$, with Eqn.~\eqref{eqn:sk larger than zero}, we can obtain
\begin{equation}
\label{eqn:gap larger than zero}
    \begin{aligned}
    g = &\sum_{i=1}^{d}c_i\fthei^\top L \fthei - \sum_{i=1}^dc_i\lambda_i \\
    = &\sum_{k=1}^{d-1}s_{k}\Delta_{k+1,k} \\
    \geqslant & 0.
    \end{aligned}
\end{equation}

\Ie, the following inequality holds:
\begin{equation}
\label{eqn:optimality}
    \begin{aligned}
    \sum_{i=1}^{d}c_i\fthei^\top L \fthei \geqslant \sum_{i=1}^dc_i\lambda_i
    \end{aligned}
\end{equation}

Since $\ethei^\top L \ethei=\lambda_i$, the inequality is tight when
\begin{equation}
\label{eqn:optmial solution}
    \begin{aligned}
    (\fone,\cdots,\fd) = (\eone,\cdots,\ed).
    \end{aligned}
\end{equation}
Therefore, we conclude that $\sum_{i=1}^dc_i\lambda_i$ is the global minimum, and $(\eone,\cdots,\ed)$ is one minimizer.

Next, we prove uniqueness. Assume that there is another minimizer for this problem, denoted as $(\fonetilde,\cdots,\fdtilde)$. We have
\begin{equation}
\label{prop: not all equal}
    \begin{aligned}
    & (\fonetilde,\cdots,\fdtilde)\neq(\eone,\cdots,\ed) \\
    \Leftrightarrow\quad &\exists i\in\{1,\cdots,d\}, \ftheitilde \neq \pm\ethei
    \end{aligned}
\end{equation}
Here we require $\ftheitilde \neq \pm \ethei$ because the sign of $\ethei$ is arbitrary and hence we do not distinguish them.
Again, $(\fonetilde,\cdots,\fdtilde)$ can be written as $(\eone,\cdots,\ed)\Qtilde$, where $\Qtilde=(\qtilde_{ij})\in \mathbb{R}^{d\times d}$ is an orthogonal matrix. Therefore, proposition in Eqn.~\eqref{prop: not all equal} is equivalent to
\begin{equation}
\label{eqn:not one or minus one}
    \begin{aligned}
    \exists i \in \{1,\cdots,d\}, \tilde{q}_{ii} \notin \{1, -1\}.
    \end{aligned}
\end{equation}
Denote $\Atilde=(\atilde_{ij})=\Qtilde\odot\Qtilde=(\qtilde^2_{ij})$. By the optimality of $(\fonetilde,\cdots,\fdtilde)$, we have 
\begin{equation}
\label{eqn:gap between optimal solution (eq)}
    \begin{aligned}
    \sum_{i=1}^dc_i\ftheitilde^\top L \ftheitilde - \sum_{i=1}^dc_i\lambda_i = 0\\
    \end{aligned}
\end{equation}
From Eqn.~\eqref{eqn:gap wrt lambda 1} to Eqn.~\eqref{eqn:sk larger than zero}, we know
\begin{equation}
\label{eqn:gap between optimal solution (ineq)}
    \begin{aligned}
     \sum_{i=1}^dc_i\ftheitilde^\top L \ftheitilde - \sum_{i=1}^dc_i\lambda_i \geqslant  0 
    \end{aligned}
\end{equation}
The equality holds if and only if $\atilde_{ji}=0, \forall (i, j) \in \{(i,j)|\  j > i\} $.
Additionally, according to Lemma~\ref{lm:equal block}, we have 
\begin{equation}
\label{eqn:lemma says this (tilde)}
    \begin{aligned}
    \sum_{i=1}^k\sum_{j=k+1}^d\atilde_{ji} = \sum_{i=1}^k\sum_{j=k+1}^d\atilde_{ij}
    \end{aligned}
\end{equation}
Therefore, we also have $\atilde_{ji}=0, \forall (i, j) \in \{(i,j)|\  j < i\} $. Accordingly, all off-diagonal elements of $\Atilde$ are 0, \ie, $\atilde_{ij}=0, \forall i\neq j$.
Moreover, since $\Qtilde$ is orthogonal, the following equality holds
\begin{equation}
    \begin{aligned}
    \sum_{j=1}^d\atilde_{ij} =\sum_{j=1}^d\qtilde^2_{ij} = 1, \forall i \in \{1,\cdots,d\}.
    \end{aligned}
\end{equation}
So we have 
\begin{equation}
\label{eqn:all diagonal are one}
    \begin{aligned}
    &\forall i \in \{1,\cdots,d\}, \atilde_{ii} = 1, \\
    \Leftrightarrow\quad&\forall i \in \{1,\cdots,d\}, \qtilde_{ii}\in \{1, -1\}
    \end{aligned}
\end{equation}
which contradicts with proposition in Eqn.~\eqref{eqn:not one or minus one}. Based on the above, we conclude that $(\eone,\cdots,\ed)$ is the unique global miminizer.
\end{proof}

\section*{B. Extension to Continuous setting}
In Sec.~\ref{sec:background} and Sec.~\ref{sec:method}, we discuss the Laplacian representation and our proposed objective in discrete case. In this section we extend previous discussions to continuous settings. Consider a graph with infinitely
many nodes (\ie, states), where weighted edges represent pairwise non-negative affinities (denoted by $D(u, v) \ge 0$ for nodes $u$ and $v$).

Following~\cite{wu2018laplacian}, we give the following definitions. A Hilbert space $\mathcal{H}$ is defined to be the set of square-integrable real-valued functions on graph nodes, \ie $\mathcal{H} = \{ f: \mathcal{S}\to\mathbb{R} \ |\  \int_\mathcal{S} |f(u)|^2 \,d\rho(u) < \infty \}$, associated with the inner-product
\begin{equation}
\label{eqn: inner product of hilbert space}
    \langle f, g \rangle_\mathcal{H} = \int_\mathcal{S} f(u)g(u)\,d\rho(u),
\end{equation}
where $\rho$ is a probability measure, \ie $\int_\mathcal{S} \,d\rho(u)=1$. The norm of a function $f$ is defined as $\langle f, f \rangle_{\mathcal{H}}$. Functions $f, g$ are orthogonal if $\langle f, g \rangle_{\mathcal{H}} = 0$; functions $f_1,\cdots,f_d$ are orthonormal if $\inner{f_i}{f_j} = \delta_{ij}, \forall i, j \in \{1, \cdots,d\}$. The graph Laplacian is defined as a linear operator $\mathscr{L}$ on $\mathcal{H}$, given by
\begin{equation}
\label{eqn: laplacian operator}
    \mathscr{L}f(u) = f(u) - \int_\mathcal{S}f(v)D(u,v)\,d\rho(v).
\end{equation}

Our goal is to learn $f_1, \cdots, f_d$ for approximating the $d$ eigenfunctions $e_1, \cdots, e_d$ associated with the smallest $d$ eigenvalues $\lambda_1, \cdots, \lambda_d$ of $\mathscr{L}$. The graph drawing objective used in~\cite{wu2018laplacian} is
\begin{equation}
\label{eqn: continuous gd}
\begin{aligned}
\min_{f_1,\cdots,f_d} & \quad \sum_{i=1}^d \inner{f_i}{\mathscr{L}f_i} \\
\st & \quad \inner{f_i}{f_j} = \delta_{ij}, \forall i, j= 1,\cdots,d.
\end{aligned}
\end{equation}

Extending this objective to the generalized form gives us
\begin{equation}
\label{eqn: continuous ggd}
\begin{aligned}
\min_{f_1,\cdots,f_d} & \quad \sum_{i=1}^d c_i \inner{f_i}{\mathscr{L}f_i} \\
\st & \quad \inner{f_i}{f_j} = \delta_{ij}, \forall i, j= 1,\cdots,d.
\end{aligned}
\end{equation}
Similarly, for continuous setting, Theorem~\ref{thm:optimiality and uniqueness} can be extended to the following theorem:
\begin{theorem}
\label{thm: continuous optimiality and uniqueness}
Assume $\forall i, f_i \in \mathrm{span}(\{e_1, \cdots, e_d\})$, and $\lambda_1 < \cdots < \lambda_d)$. Then, $c_1 > \cdots > c_d$ is a sufficient condition for the generalized graph drawing objective to have a unique global minimizer $(f_1^*, \cdots, f_d^*) = (e_1, \cdots, e_d)$, and the corresponding minimum is $\sum_{i=1}^d c_i \lambda_i$.
\end{theorem}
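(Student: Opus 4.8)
The plan is to reduce the infinite-dimensional problem in Eqn.~\eqref{eqn: continuous ggd} to exactly the finite-dimensional optimization already solved in Theorem~\ref{thm:optimiality and uniqueness}, after which the discrete argument applies verbatim. First I would invoke the span assumption $f_i \in \mathrm{span}(\{e_1,\cdots,e_d\})$ to expand each optimization variable in the orthonormal eigenbasis, $f_i = \sum_{j=1}^d q_{ji} e_j$, and collect the coefficients into a matrix $Q=(q_{ij})$, so that $(f_1,\cdots,f_d)=(e_1,\cdots,e_d)Q$ just as in the discrete case. The entire problem then becomes a question about this single $d\times d$ matrix together with the scalars $\lambda_j$ and $c_i$.

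Next I would translate the Hilbert-space constraints and objective into statements about $Q$. Using bilinearity of $\inner{\cdot}{\cdot}$ and orthonormality of the eigenfunctions, the constraint $\inner{f_i}{f_j}=\delta_{ij}$ becomes $\sum_k q_{ki}q_{kj}=\delta_{ij}$, i.e.\ $Q$ is orthogonal. For the objective, the key algebraic step is to apply the eigenfunction identity $\mathscr{L}e_j=\lambda_j e_j$ together with linearity of the operator $\mathscr{L}$, giving $\mathscr{L}f_i=\sum_j q_{ji}\lambda_j e_j$ and hence
\begin{equation}
\inner{f_i}{\mathscr{L}f_i}=\sum_{j=1}^d q_{ji}^2\lambda_j .
\end{equation}
Writing $A=(a_{ij})=Q\odot Q$, the generalized objective equals $\sum_{i=1}^d\sum_{j=1}^d c_i a_{ji}\lambda_j$, which is identical in form to the reduced objective derived in the proof of Theorem~\ref{thm:optimiality and uniqueness}.

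From this point the remainder is inherited directly. The gap $g$ against $\sum_i c_i\lambda_i$ rearranges via Fubini into $g=\sum_{k=1}^{d-1}s_k(\lambda_{k+1}-\lambda_k)$; Lemma~\ref{lm:equal block}, being a purely linear-algebraic fact about the orthogonal matrix $Q$ that is insensitive to whether the ambient space is discrete or a Hilbert space, yields $s_k\geqslant(c_k-c_{k+1})\sum_{i\leqslant k<j}a_{ji}\geqslant 0$ under $c_1>\cdots>c_d$. Distinctness $\lambda_1<\cdots<\lambda_d$ makes each $\lambda_{k+1}-\lambda_k>0$, so $g\geqslant 0$ with equality at $Q=I$, proving optimality and that the minimum is $\sum_i c_i\lambda_i$. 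Uniqueness follows exactly as before: equality forces all $s_k=0$, hence every off-diagonal entry of $A$ vanishes, and orthogonality of $Q$ then forces each diagonal entry to lie in $\{1,-1\}$, i.e.\ $f_i=\pm e_i$.

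I expect the only genuinely new step to be the reduction itself, namely verifying that the inner-product and operator manipulations that collapse Eqn.~\eqref{eqn: continuous ggd} to the $d\times d$ problem are legitimate in $\mathcal{H}$. This hinges on two facts implicit in the setup: the $e_j$ are genuine orthonormal eigenfunctions (so $\mathscr{L}$ is self-adjoint, guaranteeing real eigenvalues and an orthonormal eigenbasis), and $\mathscr{L}$ is linear, so it commutes with the finite sums appearing in the expansion of $f_i$. Once the objective is shown to depend on $(f_1,\cdots,f_d)$ only through the orthogonal matrix $Q$, no analytic subtleties of the infinite-dimensional space survive, and the combinatorial core (Lemma~\ref{lm:equal block} together with the Fubini rearrangement) carries over without modification.
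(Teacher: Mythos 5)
Your proposal is correct and follows essentially the same route as the paper: expand each $f_i$ in the orthonormal eigenbasis to get $(f_1,\cdots,f_d)=(e_1,\cdots,e_d)Q$ with $Q$ orthogonal, use linearity of $\mathscr{L}$ and $\mathscr{L}e_j=\lambda_j e_j$ to collapse the objective to $\sum_{i,j}c_i a_{ji}\lambda_j$, and then reuse the discrete argument verbatim. The only cosmetic difference is that the paper restates the block-sum lemma in Hilbert-space language (its Lemmas~\ref{lm: continuous sum of square} and~\ref{lm: continuous equal block}), whereas you observe, correctly, that once $Q$ is known to be a $d\times d$ orthogonal matrix the original Lemma~\ref{lm:equal block} applies directly.
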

To prove the Theorem~\ref{thm: continuous optimiality and uniqueness}, we need the following Lemma~\ref{lm: continuous sum of square} and Lemma~\ref{lm: continuous equal block}.
\begin{lemma}
\label{lm: continuous sum of square}
Let $f_1,\cdots,f_d$ be $d$ orthonormal functions in $\mathrm{span}(\{e_1,\cdots,e_d\})$, and $q_{ji}$ be the inner product of $f_i$ and $e_j$, \ie, $q_{ji} = \inner{f_i}{e_j}, \forall i,j \in \{1,\cdots,d\}$. Then we have (i) $\forall i \in \{1,\cdots,d\}, \sum_{j=1}^dq_{ji}^2=1$, and (ii) $\forall j \in \{1,\cdots,d\}, \sum_{i=1}^dq_{ji}^2=1$.
\end{lemma}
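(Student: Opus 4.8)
The plan is to recognize the array $Q = (q_{ji})$ as the change-of-basis matrix between two orthonormal systems spanning the \emph{same} finite-dimensional subspace, so that both claims collapse to the statement that $Q$ is orthogonal. First I would fix the setup: the eigenfunctions $e_1, \ldots, e_d$ are orthonormal in $\mathcal{H}$, so $V \triangleq \mathrm{span}(\{e_1, \ldots, e_d\})$ is a $d$-dimensional subspace with $\{e_j\}$ as an orthonormal basis. Since each $f_i \in V$ by hypothesis, I can expand $f_i = \sum_{j=1}^d \inner{f_i}{e_j}\, e_j = \sum_{j=1}^d q_{ji}\, e_j$, where the coefficients are exactly the $q_{ji}$ of the statement.

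For claim (i), I would compute $\inner{f_i}{f_i}$ by substituting this expansion and using bilinearity of the inner product: since $\inner{e_j}{e_k} = \delta_{jk}$, all cross terms vanish and $\inner{f_i}{f_i} = \sum_{j=1}^d q_{ji}^2$. As $f_1, \ldots, f_d$ are orthonormal we have $\inner{f_i}{f_i} = 1$, which gives (i) at once. This is the routine, Parseval-type half.

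Claim (ii) is where the actual content lies, and I expect it to be the main obstacle. The difficulty is that $q_{ji}$ is by definition a coefficient of $f_i$ in the $\{e_j\}$ basis, so summing its square over $i$ at fixed $j$ is not immediately a norm. The key step is a dimension-counting argument: the functions $f_1, \ldots, f_d$ are $d$ orthonormal, hence linearly independent, elements all lying in the $d$-dimensional space $V$, so they themselves form an orthonormal basis of $V$; in particular $\mathrm{span}(\{f_i\}) = V \ni e_j$ for every $j$. I can therefore re-expand each eigenfunction in the other basis, $e_j = \sum_{i=1}^d \inner{e_j}{f_i}\, f_i = \sum_{i=1}^d q_{ji}\, f_i$, using the symmetry $\inner{e_j}{f_i} = \inner{f_i}{e_j} = q_{ji}$. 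Applying the same bilinear expansion together with orthonormality of $\{f_i\}$ yields $\inner{e_j}{e_j} = \sum_{i=1}^d q_{ji}^2$, and since $e_j$ is a unit eigenfunction this equals $1$, establishing (ii).

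An equivalent and perhaps cleaner packaging I would also consider: the two displayed expansions show that $(q_{ji})$ simultaneously represents $\{f_i\}$ in terms of $\{e_j\}$ and $\{e_j\}$ in terms of $\{f_i\}$; since both are orthonormal bases of $V$, this matrix is orthogonal, and claims (i) and (ii) are precisely the statements that its columns and rows, respectively, have unit norm. The only point requiring care in the Hilbert-space setting, as opposed to $\mathbb{R}^{|\mathcal{S}|}$, is justifying that finitely many orthonormal elements genuinely span a $d$-dimensional subspace and that the expansions are exact; this is immediate because all computations take place inside the finite-dimensional $V$, so no completeness or convergence arguments are needed.
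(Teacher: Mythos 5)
Your proposal is correct and follows essentially the same route as the paper: expand each $f_i$ in the orthonormal basis $\{e_j\}$ to get (i), then observe that the $d$ orthonormal functions $f_1,\cdots,f_d$ themselves form an orthonormal basis of the $d$-dimensional span and expand each $e_j$ in $\{f_i\}$ to get (ii). Your explicit dimension-counting justification for why $\{f_i\}$ spans the subspace is a welcome bit of extra care that the paper only states in passing.
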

\begin{proof}
First, since $e_1,\cdots,e_d$ form an orthonormal basis, consider projection of $f_i$ onto $e_1, \cdots, e_d$. We have
\begin{equation}
    \begin{aligned}
    f_i = & \sum_{j=1}^d \inner{f_i}{e_j}e_j = \sum_{j=1}^d q_{ji} e_j.
    \end{aligned}
\end{equation}
Since $f_i$ has a norm of 1, we have 
\begin{equation}
    \begin{aligned}
    \inner{f_i}{f_i} = & \inner{\sum_{j=1}^d q_{ji} e_j}{\sum_{j=1}^d q_{ji} e_j} \\
    = & \sum_{j=1}^d q_{ji}^2 = 1.
    \end{aligned}
\end{equation}
The above equation proves (i).
Then, consider projection of $e_j$ onto $f_1, \cdots, f_d$ (note that $f_1, \cdots, f_d$ also form an orthogonal basis for the subspace spanned by $e_1,\cdots,e_d$). We have,
\begin{equation}
    \begin{aligned}
    e_j = \sum_{i=1}^d \inner{e_j}{f_i}f_i = \sum_{i=1}^d q_{ji} f_i.
    \end{aligned}
\end{equation}
Since $e_j$ also has a norm of 1, we have
\begin{equation}
    \begin{aligned}
    \inner{e_j}{e_j} = & \inner{\sum_{i=1}^d q_{ji}f_i}{\sum_{i=1}^d q_{ji}f_i} \\
    = & \sum_{i=1}^d q_{ji}^2 = 1.
    \end{aligned}
\end{equation}
This equation shows that (ii) holds.
\end{proof}
\begin{lemma}
\label{lm: continuous equal block}
Let $f_1,\cdots,f_d$ be $d$ orthonormal functions in $\mathrm{span}(\{e_1,\cdots,e_d\})$, $q_{ji}$ be the inner product of $f_i$ and $e_j$, \ie, $q_{ji} = \inner{f_i}{e_j}, \forall i,j \in \{1,\cdots,d\}$, and $a_{ji}=q_{ji}^2$. Then, for $k\in\{1,\cdots,d-1\}$, we have
\begin{equation}
    \begin{aligned}
     \sum_{i=1}^k\sum_{j=k+1}^d a_{ij} =
     \sum_{i=1}^k\sum_{j=k+1}^d a_{ji}
    \end{aligned}
\end{equation}
\end{lemma}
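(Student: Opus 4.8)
The plan is to deduce this identity from the doubly-stochastic structure of the array $A = (a_{ji})$, which is exactly the content of Lemma~\ref{lm: continuous sum of square}, and then reproduce almost verbatim the finite combinatorial manipulation already used in the discrete Lemma~\ref{lm:equal block}. No new analytic input is required beyond Lemma~\ref{lm: continuous sum of square}: the inner-product structure of $\mathcal{H}$ (expansions of $f_i$ in the $e_j$ basis and vice versa) has already been fully exploited there, so what remains is purely a statement about a $d\times d$ array of nonnegative numbers whose rows and columns each sum to $1$.

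First I would record the two marginal identities under the convention $q_{ji} = \inner{f_i}{e_j}$ and $a_{ji} = q_{ji}^2$, so that the first subscript of $a$ acts as a row index and the second as a column index. Part~(i) of Lemma~\ref{lm: continuous sum of square} gives, for every column index $i$, the column marginal $\sum_{j=1}^d a_{ji} = 1$, while part~(ii) gives, for every row index $j$, the row marginal $\sum_{i=1}^d a_{ji} = 1$. Summing the first over $i=1,\dots,k$ and the second over $j=1,\dots,k$ shows that the total mass of the first $k$ rows equals that of the first $k$ columns:
\begin{equation}
\sum_{i=1}^k \sum_{j=1}^d a_{ij} = k = \sum_{i=1}^k \sum_{j=1}^d a_{ji}.
\end{equation}

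Next I would split each off-diagonal block as a full row/column sum minus the top-left $k\times k$ block,
\begin{equation}
\sum_{i=1}^k\sum_{j=k+1}^d a_{ij} = \sum_{i=1}^k\sum_{j=1}^d a_{ij} - \sum_{i=1}^k\sum_{j=1}^k a_{ij},
\end{equation}
then use the displayed equality to replace the first term by the corresponding column sum, and note that the top-left block is invariant under swapping the two summation indices (since both range over $\{1,\dots,k\}$, relabelling gives $\sum_{i=1}^k\sum_{j=1}^k a_{ij} = \sum_{i=1}^k\sum_{j=1}^k a_{ji}$). The right-hand side then collapses to $\sum_{i=1}^k\sum_{j=1}^d a_{ji} - \sum_{i=1}^k\sum_{j=1}^k a_{ji} = \sum_{i=1}^k\sum_{j=k+1}^d a_{ji}$, which is precisely the claim.

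The only thing demanding care, rather than a genuine obstacle, is the index bookkeeping: one must confirm that Lemma~\ref{lm: continuous sum of square} supplies \emph{both} the row and the column marginals, so that $A$ is genuinely doubly stochastic, and keep the roles of $i$ and $j$ straight throughout. Once that is in place the argument is a one-line transcription of the discrete proof of Lemma~\ref{lm:equal block} and involves no integration or limiting step.
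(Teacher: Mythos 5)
Your proposal is correct and follows essentially the same route as the paper's proof: both reduce the claim to the doubly-stochastic property of $A$ supplied by Lemma~\ref{lm: continuous sum of square} (first $k$ rows and first $k$ columns each sum to $k$), then subtract the symmetric top-left $k\times k$ block from both sides. The index bookkeeping you flag is handled correctly, so there is nothing to add.
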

\begin{proof}
By Lemma~\ref{lm: continuous sum of square}, we have
\begin{equation}
    \begin{aligned}
    \sum_{i=1}^{k}\sum_{j=1}^d a_{ij} = \sum_{i=1}^{k}\sum_{j=1}^d a_{ji} = k
    \end{aligned}
\end{equation}
Therefore, we have
\begin{equation}
    \begin{aligned}
    \sum_{i=1}^k\sum_{j=k+1}^d a_{ij} = & \sum_{i=1}^k\sum_{j=1}^d a_{ij} - \sum_{i=1}^k\sum_{j=1}^k a_{ij} \\
    = & \sum_{i=1}^k\sum_{j=1}^d a_{ji} - \sum_{i=1}^k\sum_{j=1}^k a_{ij} \\
    = & \sum_{i=1}^k\sum_{j=1}^d a_{ji} - \sum_{i=1}^k\sum_{j=1}^k a_{ji} \\
    = & \sum_{i=1}^k\sum_{j=k+1}^{d}a_{ji}
    \end{aligned}
\end{equation}
\end{proof}
With Lemma~\ref{lm: continuous equal block}, we can prove Theorem~\ref{thm: continuous optimiality and uniqueness}.
\begin{proof}
Since $f_i \in \mathrm{span}(\{e_1,\cdots,e_d\})$, without loss of generality, we may rewrite $f_i$ as 
\begin{equation}
    f_i = \sum_{j=1}^d q_{ji} e_j,
\end{equation}
where $q_{ji}=\inner{f_i}{e_j}$. Then, the objective of problem~\eqref{eqn: continuous ggd} is 
\begin{equation}
    \begin{aligned}
    h(f_1,\cdots,f_d) \triangleq &\sum_{i=1}^d c_i \inner{f_i}{\mathscr{L}f_i} \\
    = & \sum_{i=1}^d c_i \inner{\sum_{j=1}^d q_{ji} e_j}{\mathscr{L}\sum_{j=1}^d q_{ji} e_j} \\
    = & \sum_{i=1}^d c_i \inner{\sum_{j=1}^d q_{ji} e_j}{\sum_{j=1}^d q_{ji} \mathscr{L} e_j} \\
    = & \sum_{i=1}^d c_i \inner{\sum_{j=1}^d q_{ji} e_j}{\sum_{j=1}^d q_{ji} \lambda_j e_j} \\
    = & \sum_{i=1}^d c_i \sum_{j=1}^d q_{ji}^2 \lambda_j \inner{e_j}{e_j} \\
    = & \sum_{i=1}^d c_i \sum_{j=1}^d a_{ji}\lambda_j,
    \end{aligned}
\end{equation}
where $a_{ji} = q_{ji}^2$.

Let $g$ denote the gap between the objective and $\sum_{i=1}^d c_i \lambda_i$. Then we have
\begin{equation}
\label{eqn: continuous gap wrt lambda}
    \begin{aligned}
    g \triangleq & \sum_{i=1}^{d}c_i\inner{f_i}{\mathscr{L}f_i} - \sum_{i=1}^dc_i\lambda_i \\
    = &\sum_{i=1}^dc_i\sum_{j=1}^{d}a_{ji}\lambda_j - \sum_{i=1}^dc_i\lambda_i\\
    = & \sum_{i=1}^dc_i\sum_{j=1}^{d}a_{ji}(\lambda_j-\lambda_i) \\
    = & \sum_{i=1}^d\sum_{j=1}^{d}c_ia_{ji}(\lambda_j-\lambda_i) 
    \end{aligned}
\end{equation}
We can see that Eqn.~\eqref{eqn: continuous gap wrt lambda} has the same form as Eqn.~\eqref{eqn:gap wrt lambda 2}. Thus we can follow the same steps as in the proof of Theorem~\ref{thm:optimiality and uniqueness} (\ie, from Eqn.~\eqref{eqn:gap wrt delta 1} to Eqn.~\eqref{eqn:all diagonal are one}, replacing Lemma~\ref{lm:equal block} with Lemma~\ref{lm: continuous equal block}) to finish proving Theorem~\ref{thm: continuous optimiality and uniqueness}.
\end{proof}

\section*{C. Obtaining Training objective}

In~\cite{wu2018laplacian}, the authors express the graph drawing objective as an expectation
\begin{equation}
\label{eqn:training obj for graph drawing}
    \mathbb{E}_{(s,s')\sim \mathcal{T}} \sum_{i=1}^k \left(f_i(s) - f_i(s')\right)^2
\end{equation}
and transform the orthonormal constraints into the following penalty term
\begin{equation}
\label{eqn: penalty graph drawing}
    \mathbb{E}_{s\sim\rho, s'\sim\rho} \sum_{i,j}^k \left(f_i(s)f_j(s) - \delta_{ij}\right) \left(f_i(s')f_j(s') - \delta_{ij}\right).
\end{equation}
Here $k$ denotes the dimension of the representation and $\sum_{i,j}^k$ is short for $\sum_{i=1}^k\sum_{j=1}^k$. From Eqn.~\eqref{eqn:sgd_rewrite}, we can see that our objective can be viewed as the sum of $d$ graph drawing objectives. Thus we can obtain Eqn.~\eqref{eqn:training obj} by summing $d$ objectives in Eqn.~\eqref{eqn:training obj for graph drawing} with $k$ varying from $1$ to $d$. Similarly, we can obtain Eqn.~\eqref{eqn: penalty} by summing $d$ penalty terms in Eqn.~\eqref{eqn: penalty graph drawing}.

\clearpage
\section*{D. Environment Descriptions}

Two discrete gridworld environments used in our experiments: \texttt{GridRoom} and \texttt{GridMaze}, are built with MiniGrid~\cite{gym_minigrid}. The \texttt{GridRoom} environment is a 20$\times$20 grid with 271 states, and the \texttt{GridMaze} environment is a 18$\times$18 grid with 161 states. In both environments, the agent has 4 four actions: moving \textit{left}, \textit{right}, \textit{up} and \textit{down}. When the agent hits the wall, it remains in previous position. Two raw state representations are considered: $(x, y)$ coordinates (scaled within $[-1, 1]$) and top-view image of the grid (scaled within $[0, 1]$).

Two continuous control navigation environments used in our experiments: \texttt{PointRoom} and \texttt{PointMaze}, are built with PyBullet~\cite{coumans2019}. The \texttt{PointRoom} environment is of size 20$\times$20 and each room is of size 5$\times$5. The \texttt{GridMaze} environment is of size 18$\times$18 and the width of each corridor is 2. For both environments, a ball with diameter 1 is controlled to navigate in the environment. It takes a continuous action (within range $[0,2\pi]$) to decide the direction and then move a small step forward along this direction. We consider the $(x, y)$ positions as the raw state representations.

\section*{E. Experiment Configurations}

\subsection*{E.1 Learning Laplacian Representations}

For learning Laplacian representations on \texttt{GridRoom} and \texttt{GridMaze} environments, we collect a dataset of 100,000 transitions using a uniformly random policy with random starts. Each episode has a length of 50. Following~\cite{wu2018laplacian}, we use a fully connected neural network for $(x,y)$ position observations and a convolutional neural network for image observations. The network structures are described in Tab.~\ref{tab:fc network} and Tab.~\ref{tab:cnn}. An additional linear layer  is used to map the output into representations. We train the networks for 200,000 iterations by Adam optimizer~\cite{kingma2014adam} with batch size 1024 and learning rate 0.001. The weight for the penalty term in Eqn.~\eqref{eqn: penalty} is set to 1.0. Following~\cite{wu2018laplacian}, we use the discounted multi-step transitions with discount parameter 0.9.

For learning Laplacian representations on \texttt{PointRoom} and \texttt{PointMaze} environments, we collect a dataset of 1,000,000 transitions using a uniformly random policy with random starts. Each episode has a length of 500. We use the same fully connected network as mentioned above and keep other configurations unchanged except using a larger batch size of 8192.

For computing $\mathrm{SimGT}$ and $\mathrm{SimRUN}$ for continuous states, we calculate the inner summation in Eqn.~\eqref{eqn: simGT} and Eqn.~\eqref{eqn: simRUN} over sampled states rather than the entire state space.

\begin{table}[t]
    \centering
    \caption{Network architecture of the fully connected network.}
    \vskip 0.1in
    \begin{tabular}{c c c}
    \toprule
        Layer & Number of units & Activation  \\
    \midrule
        Linear & 256 & ReLU \\
        Linear & 256 & ReLU \\
        Linear & 256 & ReLU \\
    \bottomrule
    \end{tabular}
    \label{tab:fc network}
\end{table}

\begin{table}[t]
    \centering
    \caption{Network architecture of the convolutional network. (C, K, S, P) correspond to number of output channels, kernel size, stride and padding.}
    \vskip 0.1in
    \begin{tabular}{c c c}
    \toprule
        Layer & Configurations (C, K, S, P) & Activation  \\
    \midrule
        Conv2D & (16, 4, 2, 2) & ReLU \\
        Conv2D & (16, 4, 2, 2) & ReLU \\
        Conv2D & (16, 4, 1, 0) & ReLU \\
    \bottomrule
    \end{tabular}
    \label{tab:cnn}
\end{table}

\subsection*{E.2 Option Discovery}

\begin{table}[t!]
    \centering
    \caption{Hyperparameters of DQN for learning options.}
    \vskip 0.1in
    \begin{tabular}{c c}
    \toprule
        Timesteps & 100,000 \\
        Episode length & 50 \\
        Optimizer & Adam \\
        Learning rate & 1e-3 \\
        Learning starts & 5000 \\
        Training frequency & 1 \\
        Target update frequency & 50 \\
        Target update rate & 0.05 \\
        Replay size & 100,000 \\
        Batch size & 128 \\
        Discount factor $\gamma$ & 0 \\
    \bottomrule
    \end{tabular}
    \label{tab:params of dqn for options}
\end{table}

We run option discovery experiments on \texttt{GridRoom} and \texttt{GridMaze} environments with $(x,y)$ position observations. 
Following~\cite{machado2017laplacian}, we approximate the options greedily ($\gamma=0$). For each dimension of the learned representation, one option is trained by Deep Q-learning~\cite{mnih2013playing} with an intrinsic reward function $r_i(s,s')=f_i(s) - f_i(s')$ and the other with $-r_i(s,s')$. 
The termination set of an option is defined as the set of states where $f_i(s)$ is a local maximum (or minimum for the other direction). For the deep Q-network (DQN), we use the same fully connected network as one used for learning representations. The hyperparameters for training DQN are summarized in Tab.~\ref{tab:params of dqn for options}.

To compute $N_{i\to j}$, we first augment the agent's action space with the learned options. For each starting state in room $i$, we record how many steps an agent takes to arrive in room $j$ when it follows a uniformly random policy. We run 50 trajectories for each starting state to stabilize the result.

\subsection*{E.3 Reward Shaping}

\begin{figure}[t]
    \centering
    \includegraphics[width=0.6\linewidth]{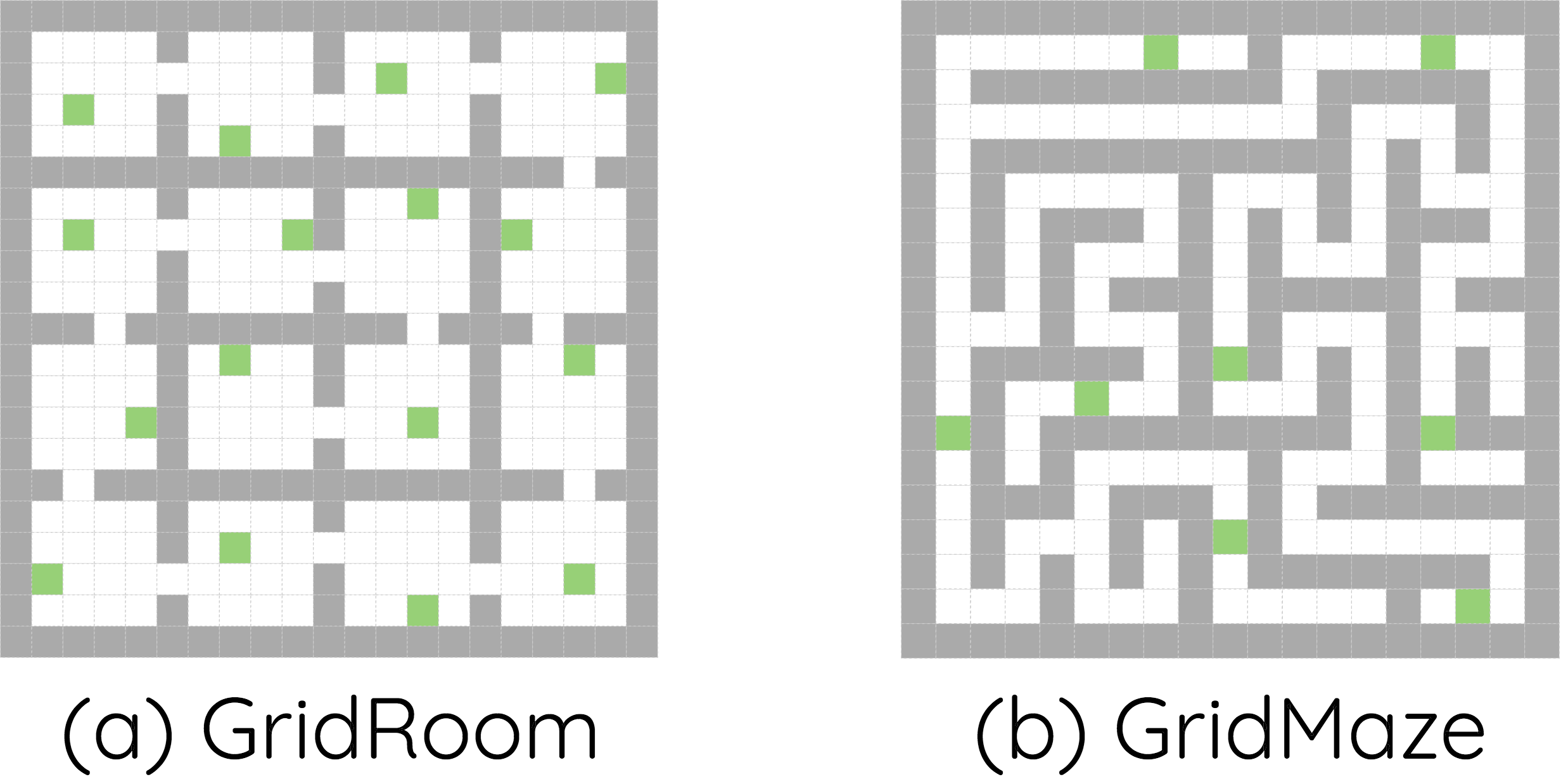}
    \vskip -0.05in
    \caption{Goal positions in \texttt{GridRoom} and \texttt{GridMaze} for reward shaping experiments. Each green cell represents a goal.}
    \vskip -0.1in
     \label{fig:goal positions}
\end{figure}

\begin{table}[t]
    \centering
    \caption{Hyperparameters of DQN for reward shaping.}
    \vskip 0.1in
    \begin{tabular}{c c}
    \toprule
        Timesteps & 200,000 \\
        Episode length & 150 \\
        Optimizer & Adam \\
        Learning rate & 1e-3 \\
        Learning starts & 5000 \\
        Training frequency & 1 \\
        Target update frequency & 50 \\
        Target update rate & 0.05 \\
        Replay size & 100,000 \\
        Batch size & 128 \\
        Discount factor $\gamma$ & 0.99 \\
    \bottomrule
    \end{tabular}
    \label{tab:params of dqn for reward shaping}
\end{table}

We run reward shaping experiments on \texttt{GridRoom} and \texttt{GridMaze} environments. Following~\cite{wu2018laplacian}, we train the agent in goal-achieving tasks using Deep Q-learning~\cite{mnih2013playing} with $(x,y)$ positions as observations. At each step, the agent receives a reward of 0 if it reaches the goal state and -1 otherwise. The success rate of reaching the goal state is used to measure the performance. As mentioned in the main paper, we use multiple goals to eliminate the bias brought by the goal position. Their locations are depicted in Fig.~\ref{fig:goal positions}. For the Q-network, we use the same fully connected network as one used for learning representations. The hyperparameters for training DQN are summarized in Tab.~\ref{tab:params of dqn for reward shaping}.


\section*{F. Additional Results}
\subsection*{F.1 Learning Laplacian Representations}

In Sec.~\ref{subsec: exp:laprep}, Fig.~\ref{fig:vis-gridmaze} and Fig.~\ref{fig:vis-pointroom} visualize the learned representations on \texttt{GridMaze} and \texttt{PointRoom}. Here we include additional visualizations for \texttt{GridRoom} and \texttt{PointMaze} in Fig.~\ref{fig:vis-gridroom} and Fig.~\ref{fig:vis-pointmaze}.

In Sec.~\ref{subsec: exp:laprep}, Fig.~\ref{fig:large variance} visualize first 3 dimensions of learned representations in different runs on \texttt{GridRoom}. Here we show all 10 dimensions in Fig.~\ref{fig:large variance full ours} and Fig.~\ref{fig:large variance full gd}.

\begin{figure}[ht!]
     \centering
    \includegraphics[width=\linewidth]{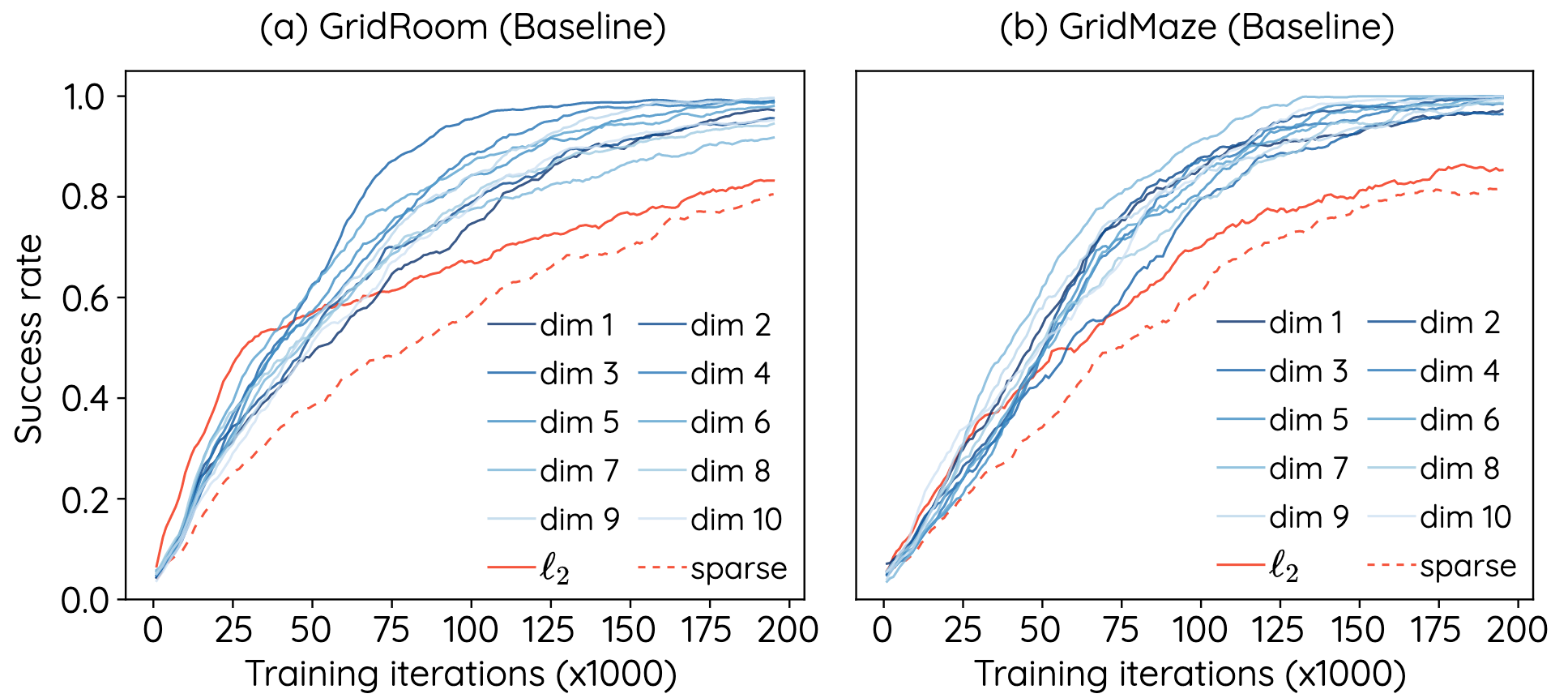}
    \vskip -0.05in
    \caption{Results of reward shaping with each dimension of Laplacian representations learned by baseline method. $\ell_2$ denotes reward shaping with L2 distance in raw observation space (\ie, $(x,y)$ position), and \emph{sparse} denotes no reward shaping.}
    \vskip -0.1in
     \label{fig:reward shaping baseline}
\end{figure}

\begin{figure}[ht!]
     \centering
    \includegraphics[width=\linewidth]{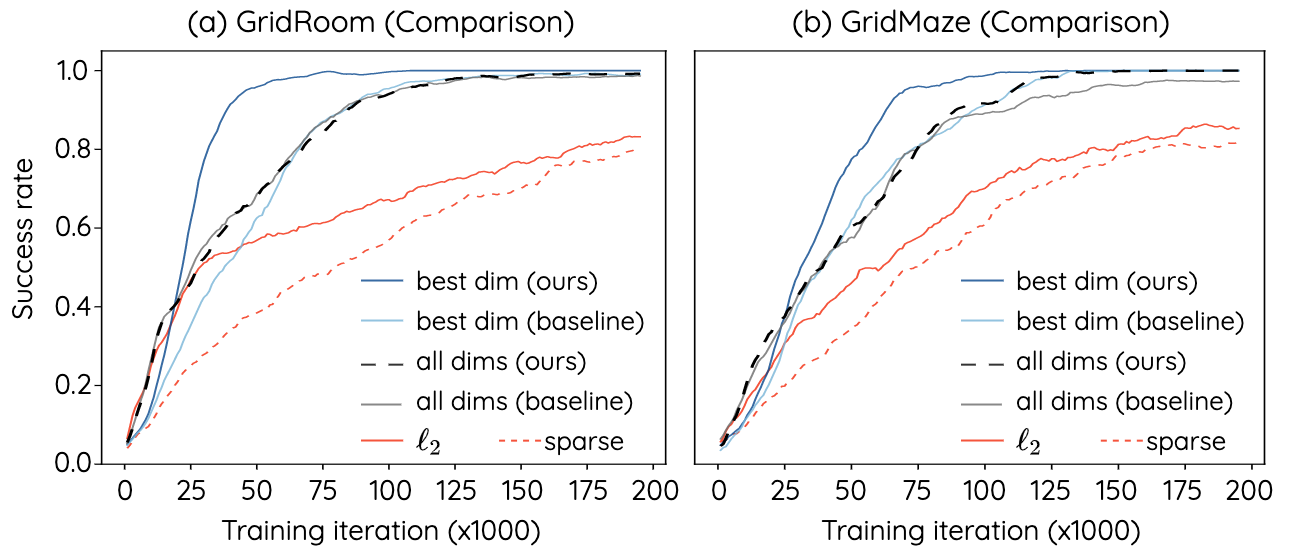}
    \vskip -0.05in
    \caption{Results of reward shaping with learned Laplacian representations. $\ell_2$ denotes reward shaping with L2 distance in raw observation space (\ie, $(x,y)$ position), and \emph{sparse} denotes no reward shaping.}
    \vskip -0.1in
     \label{fig:reward shaping supp}
\end{figure}

\begin{table}[ht!]
    \centering
    \caption{Absolute cosine similarity (averaged across dimensions) between our learned representation and ground truth, on \texttt{GridRoom} environment.}
    \vskip 0.1in
    \begin{tabular}{c c}
    \toprule
        Coefficients & Similarity \\
    \midrule
        group 1 & 0.9905 \\
        group 2 & 0.9653 \\
        default & 0.9913 \\
    \bottomrule
    \end{tabular}
    \label{tab:simsum_gridroom}
\end{table}

\subsection*{F.2 Reward Shaping}

For completeness, we show the results with each dimension of learned representation for baseline method in Fig.~\ref{fig:reward shaping baseline}, and include the results for “all dims - ours” in Fig.~\ref{fig:reward shaping supp}.

\subsection*{F.3 Evaluation On Other Coefficient Choices}

In Sec.~\ref{subsubsec:effectiveness of generalized objectives}, Fig.~\ref{fig:coefficients} shows the similarities between our learned representation (with different coefficient groups) and the ground truth on \texttt{GridMaze}. Here we show the results on \texttt{GridRoom} in Tab.~\ref{tab:simsum_gridroom}.

\subsection*{F.4 Visualization of the discovered options}

In Fig.~\ref{fig:viz option gridmaze} and \ref{fig:viz option gridmaze contd}, we visualize the discovered options by different representations.

\begin{figure*}[t]
     \centering
    \includegraphics[width=\linewidth]{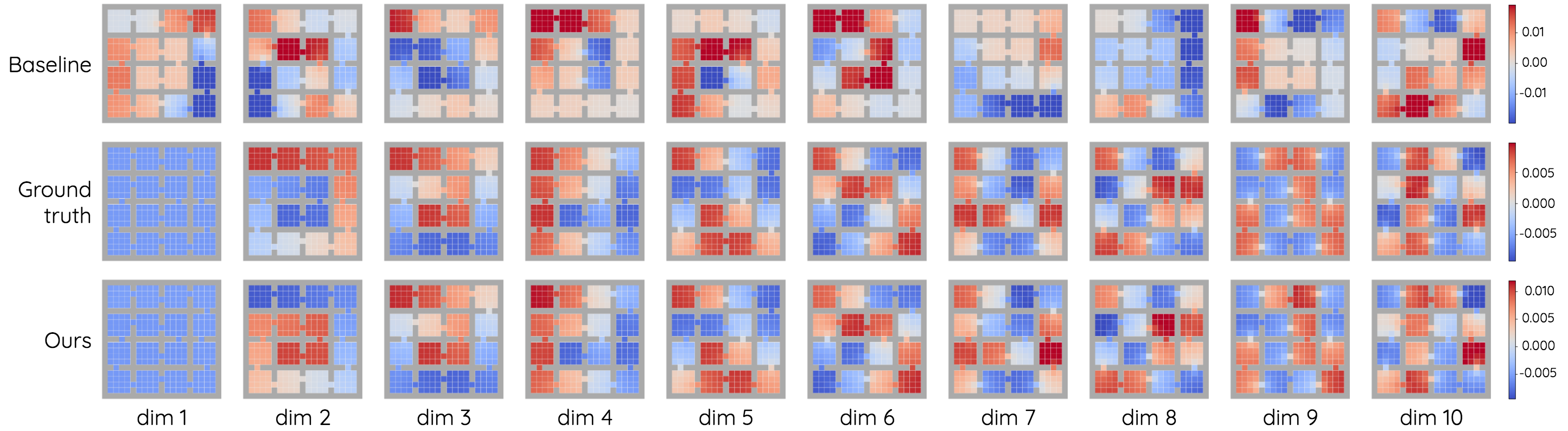}
    \vskip -0.05in
    \caption{Visualization of the learned 10-dimension Laplacian representation and the ground truth on \texttt{GridRoom}. Each heatmap shows a dimension of the representation for all states in the environment. Best viewed in color.}
     \label{fig:vis-gridroom}
\end{figure*}

\begin{figure*}[t]
     \centering
    \includegraphics[width=\linewidth]{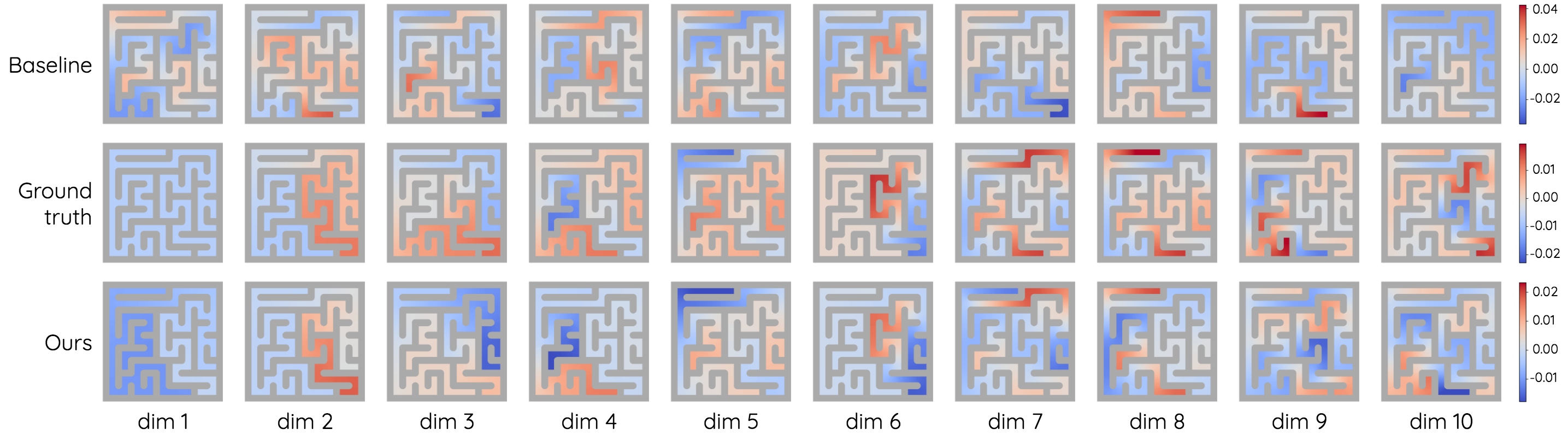}
    \vskip -0.05in
    \caption{Visualization of the learned 10-dimension Laplacian representations and the ground truth on \texttt{PointMaze}. Each heatmap shows a dimension of the representation for all the states in the environment. Best viewed in color.}
     \label{fig:vis-pointmaze}
     \vskip -0.05in
\end{figure*}

\clearpage
\begin{figure*}[t]
     \centering
    \includegraphics[width=0.9\linewidth]{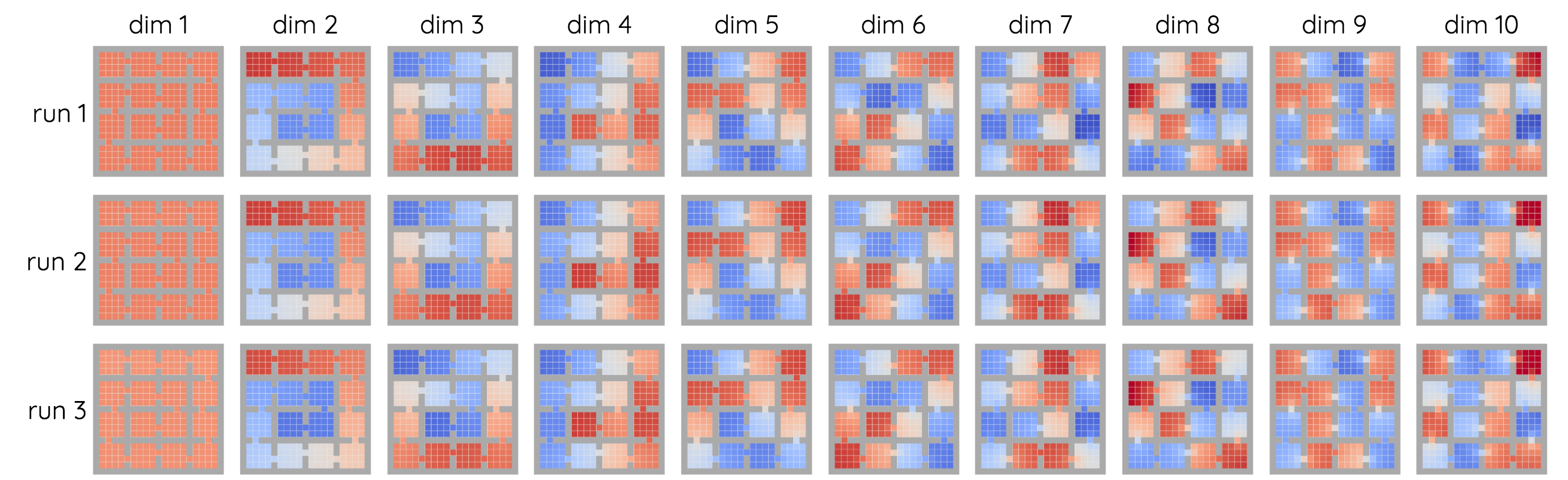}
    \vskip -0.05in
    \caption{Visualization of the Laplacian representations learned by our method on \texttt{GridRoom} in 3 different runs.}
     \label{fig:large variance full ours}
     \vskip -0.05in
\end{figure*}

\begin{figure*}[t]
     \centering
    \includegraphics[width=0.9\linewidth]{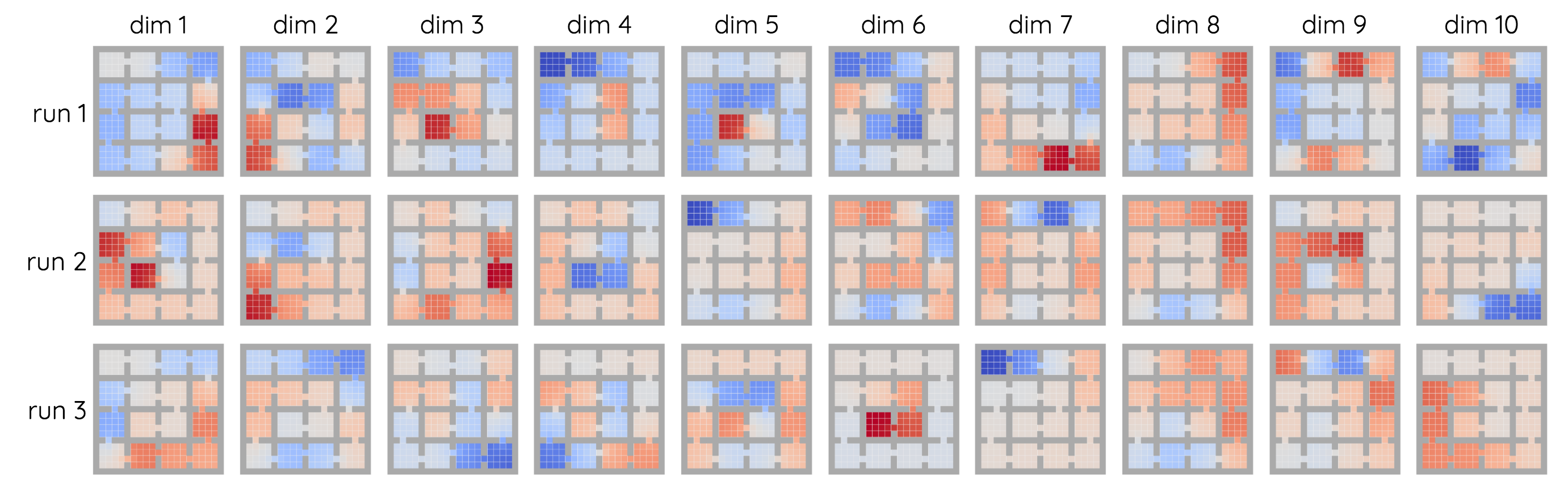}
    \vskip -0.05in
    \caption{Visualization of the Laplacian representations learned by baseline method \texttt{GridRoom} in 3 different runs.}
     \label{fig:large variance full gd}
     \vskip -0.05in
\end{figure*}

\begin{figure*}[t]
     \centering
    \includegraphics[width=0.77\linewidth]{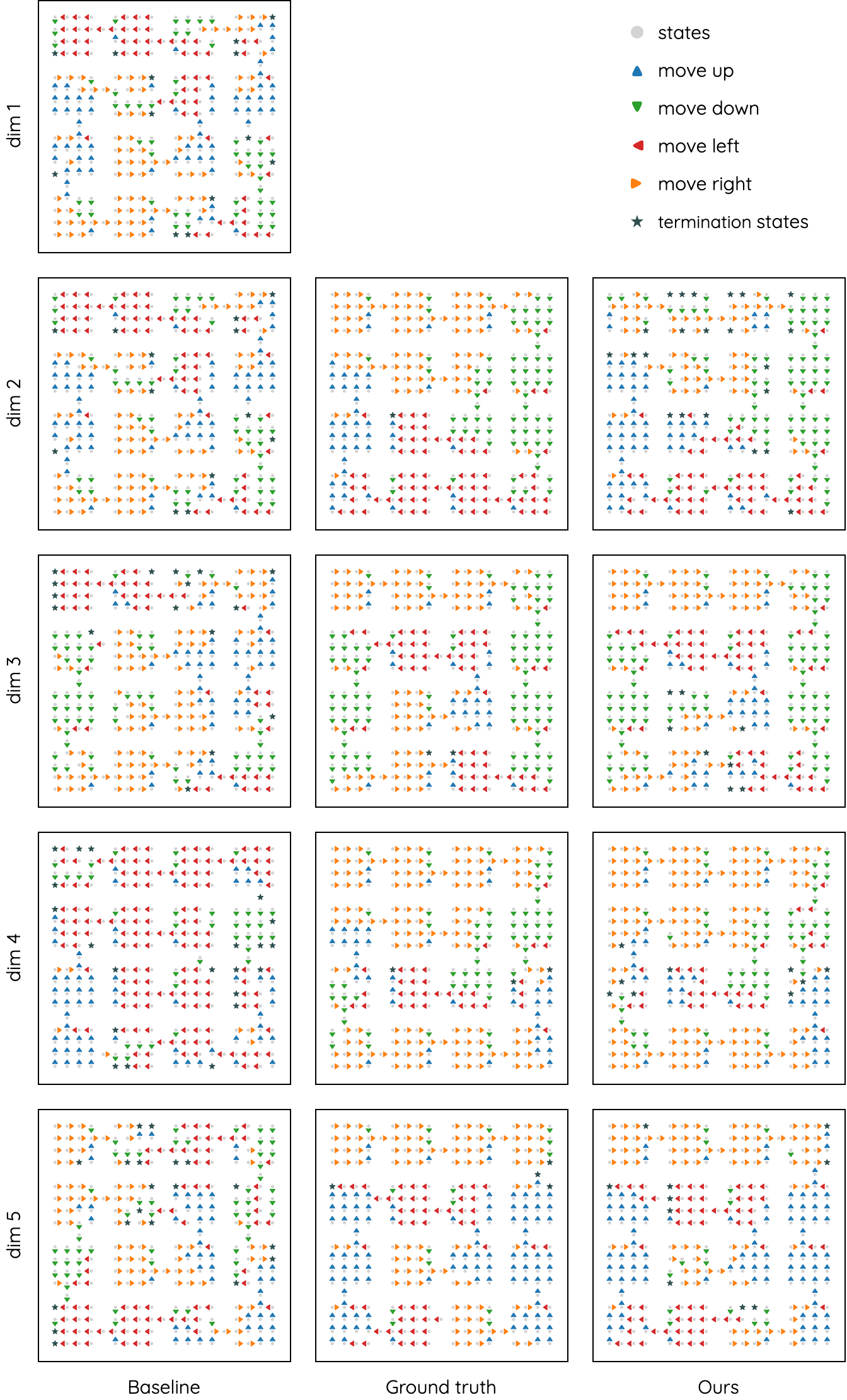}
    \vskip -0.05in
    \caption{Visualization of the discovered options in \texttt{GridRoom}.}
     \label{fig:viz option gridmaze}
     \vskip -0.05in
\end{figure*}

\begin{figure*}[t]
     \centering
    \includegraphics[width=0.77\linewidth]{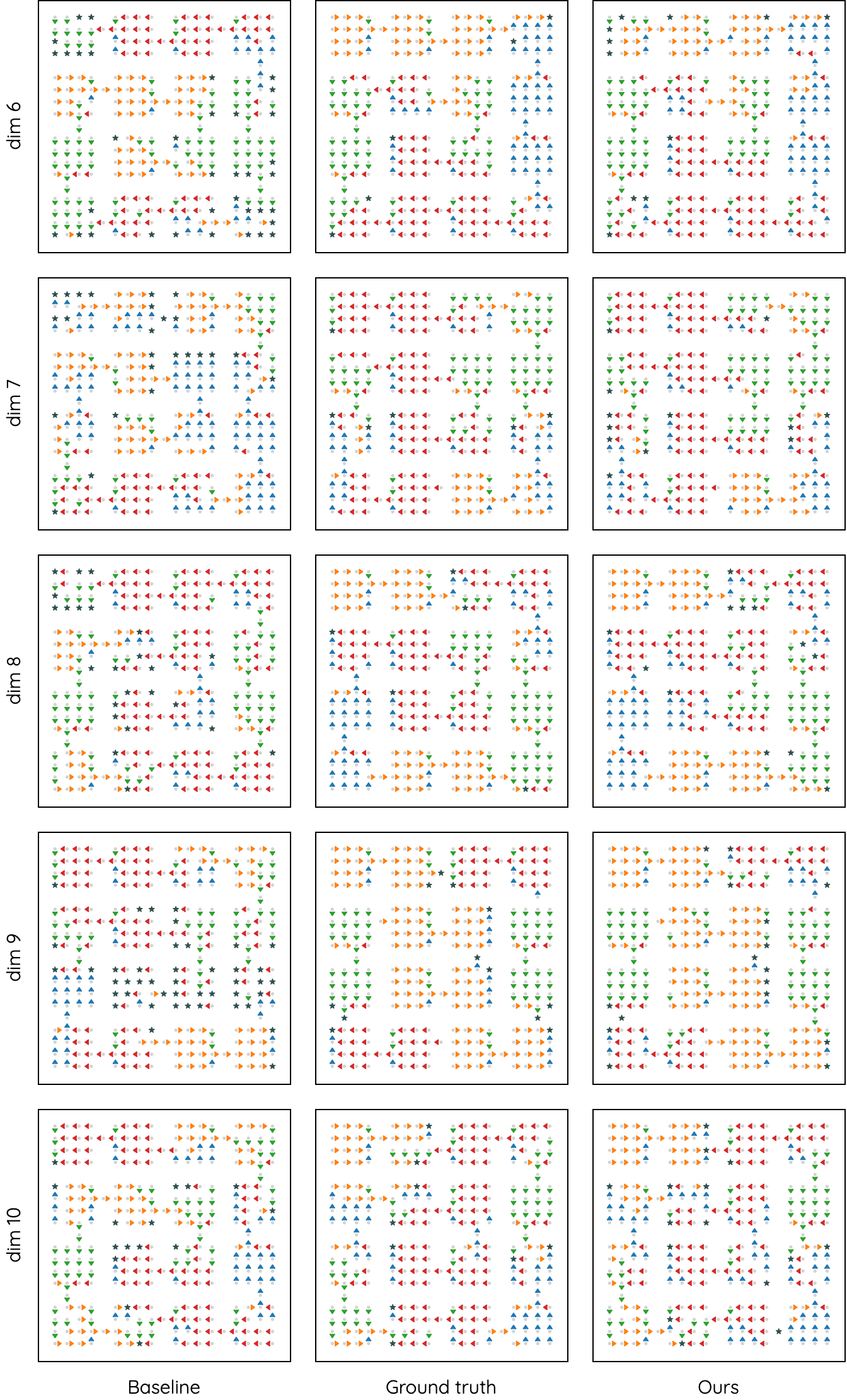}
    \vskip -0.05in
    \caption{Visualization of the discovered options in \texttt{GridRoom} (continued).}
     \label{fig:viz option gridmaze contd}
     \vskip -0.05in
\end{figure*}

\end{document}